\crefname{hypothesis}{Hypothesis}{Hypotheses}
\crefname{fact}{Fact}{Facts}
\title{From Equations to Insights: Unraveling Symbolic Structures in PDEs with LLMs~\thanks{The first three authors contribute equally.}}
\author{Rohan Bhatnagar~\thanks{Department of Computer Science, University of Maryland, College Park, Maryland, USA 20742
  (\email{rbhatna1@terpmail.umd.edu}).} \and Ling Liang~\thanks{Department of Mathematics, The University of Tennessee, Knoxville, Tennessee, USA 37916 
  (\email{liang.ling@u.nus.edu}).}
\and Krish Patel~\thanks{Department of Computer Science, University of Maryland, College Park, Maryland, USA 20742
  (\email{kripatel@terpmail.umd.edu}).} \and Haizhao Yang~\thanks{(Corresponding author) Department of Mathematics, and Department of Computer Science, University of Maryland, College Park, Maryland, USA 20742
  (\email{hzyang@umd.edu}).}}
\begin{document}

\maketitle

\begin{abstract}
Motivated by the remarkable success of artificial intelligence (AI) across diverse fields, the application of AI to solve scientific problems, often formulated as partial differential equations (PDEs), has garnered increasing attention. While most existing research concentrates on theoretical properties (such as well-posedness, regularity, and continuity) of the solutions, alongside direct AI-driven methods for solving PDEs, the challenge of uncovering symbolic relationships within these equations remains largely unexplored. In this paper, we propose leveraging large language models (LLMs) to learn such symbolic relationships. Our results demonstrate that LLMs can effectively predict the operators involved in PDE solutions by utilizing the symbolic information in the PDEs both theoretically and numerically. Furthermore, we show that discovering these symbolic relationships can substantially improve both the efficiency and accuracy of symbolic machine learning for finding analytical approximation of PDE solutions, delivering a fully interpretable solution pipeline. This work opens new avenues for understanding the symbolic structure of scientific problems and advancing their solution processes.
\end{abstract}

\begin{keywords}
Large Language Models, Finite Expression Method, High Dimensional PDEs, Operator Relation 
\end{keywords}

\begin{MSCcodes}
65N75, 68T20, 90C15
\end{MSCcodes}

\section{Introduction}\label{sec:introduction}

Partial differential equations (PDEs) are the mathematical language of the natural sciences: they encode how fields evolve and interact by linking an unknown function on space–time to its derivatives. From turbulent flows and elastic structures to reaction–diffusion in biology and quantum systems, many grand scientific challenges are governed by PDE models \cite{evans2022partial}. 

Mathematically, a strong-form PDE constrains an unknown function $u:\Omega\to\mathbb{R}$ on a domain $\Omega\subseteq\mathbb{R}^d$ by relating $u$ to its partial derivatives together with boundary data
\begin{equation} \label{eq-pde} 
Du = f,\quad \text{in } \Omega, \quad Bu = g,\quad \text{on }\partial \Omega.
\end{equation}
Here $D$ is a differential operator of order $k$ that may depend on $x$, $u$, and derivatives of $u$ up to order $k$; a linear prototype is $Du=\sum_{|\alpha|\le k} a_\alpha(x)\,D^\alpha u$ with multi-index $\alpha=(\alpha_1,\ldots,\alpha_d)$ and $D^\alpha u:=\partial^{|\alpha|}u/\partial x_1^{\alpha_1}\cdots\partial x_d^{\alpha_d}$, while $f:\Omega\to\mathbb{R}$ is a given source term and $g:\partial \Omega \to \mathbb{R}$ is given. The boundary operator $B$ prescribes data on $\partial\Omega$. Some commonly used boundary conditions include $Bu=\gamma_0 u=u|_{\partial\Omega}=g$ (Dirichlet), $Bu=\gamma_1 u=\partial_{\boldsymbol n}u=g$ with $\boldsymbol n$ the outward unit normal (Neumann), or $Bu=\gamma_1 u+\kappa\,\gamma_0 u=g$ for $\kappa\ge0$ (Robin) where $\gamma_0:H^1(\Omega)\to H^{1/2}(\partial\Omega)$ and $\gamma_1:H^1(\Omega)\to H^{-1/2}(\partial\Omega)$ denote continuous trace maps. For time-dependent problems on $Q=(0,T)\times\Omega$, one augments the system with the initial condition $u(0,\cdot)=u_0$ in $\Omega$. Classical examples include the parabolic and hyperbolic models
\[
u_t-\nabla\!\cdot(A\nabla u)=f, \qquad u_{tt}-\nabla\!\cdot(A\nabla u)=f,
\]
where $A$ denotes the diffusion/conductivity tensor. When classical derivatives are unavailable, the statement $Du=f$ and $Bu=g$ is interpreted in a weak (variational) sense on Sobolev spaces, where existence and uniqueness follow under standard boundedness and coercivity assumptions (e.g., by the Lax–Milgram theorem). And the principal symbol of $D$ then classifies the PDE as elliptic, parabolic, or hyperbolic.

PDEs often arise in high-dimensional settings (i.e., $d$ is large); for example, well-known instances such as the Poisson equation \cite{yu2018deep}, the linear conservation law \cite{chen2021deep}, and the nonlinear Schrödinger equation \cite{han2020solving} naturally involve multiple spatial and temporal variables. While low-dimensional PDEs can be solved effectively via traditional mesh-dependent methods such as finite difference methods \cite{leveque2007finite}, finite elements methods \cite{hughes2003finite}, finite volume methods \cite{toro2013riemann}, and spectral methods \cite{boyd2001chebyshev}, developing efficient and accurate algorithmic frameworks for computing numerical
solutions to high-dimensional PDEs remains an important and challenging topic due to the curse of dimensionality (i.e., the computational costs grow exponentially with respect to the dimensionality) \cite{weinan2021algorithms}. 

Before solving a PDE, one must understand certain fundamental properties of the solution $u$ that follow from the PDE itself and from the data, namely, $f$ and $g$. These considerations align with the classical theory of PDEs \cite{evans2022partial}, which focuses on establishing well-posedness (i.e., the existence and uniqueness of solutions) and their regularity and continuity. However, the underlying symbolic relationship among $u$, $f$ and $g$ often remains underexplored. To address this gap, we pose the following question: \emph{Assuming that the PDE \eqref{eq-pde} admits an analytical solution $u$, and given the operators in $f$ and $g$, how are the operators in $u$ related to those in $f$ and $g$?}

Understanding this symbolic connection is crucial for deriving fully interpretable, closed-form solutions, a feature lacking in most existing solution methods, particularly in the high-dimensional setting. As an illustrative example, we consider solving the PDE \eqref{eq-pde} via the least square method \cite{dissanayake1994neural,lagaris1998artificial,sirignano2018dgm}, which defines a straightforward loss to characterize the error of the estimated solution by
\begin{equation}\label{eq-least-squared}
    L(u) = \| D u - f \|^2_{L^2(\Omega)} + \lambda \| B u - g \|^2_{L^2(\partial \Omega)},
\end{equation}
where \( \lambda > 0 \) balances the influence of boundary conditions. The goal is to find a function $u^*\in L^2(\Omega)$ so that the least square loss is minimized, i.e., 
\begin{equation}
    \label{eq-ls-min}
    u^* = \mathrm{argmin}\left\{ L(u)\;:\; u\in L^2(\Omega)\right\}.
\end{equation}
Because the search space $L^2(\Omega)$ is an infinite dimensional space, finding the solution $u^*$ is extremely challenging. Fortunately, the search space can be significantly simplified if one knows the operators appearing in the solution in advance, making the search of the optimal solution much easier and effective. This can be accomplished by adopting state-of-the-art symbolic machine learning approaches, including the efficient finite expression method (FEX) of \cite{liang2022finite}.

To uncover the symbolic relationship, we leverage large language models (LLMs) as predictive tools. The process begins with the generation of a comprehensive, structured dataset composed of symbolic expressions derived from a diverse range of PDE types, including elliptic, parabolic, and hyperbolic equations, along with their associated boundary and initial conditions. These symbolic expressions are represented as computational trees (see Fig.~\ref{fig:comp_expr_tree}), where nodes are populated with unary (e.g., sin, cos, exp) and binary operators (e.g., +, -, *, /). This tree structure captures the hierarchical nature of mathematical expressions and ensures a systematic representation of the PDE solutions. 

\begin{figure*}[htb!]
    \centering
    \includegraphics[width=0.98\linewidth]{./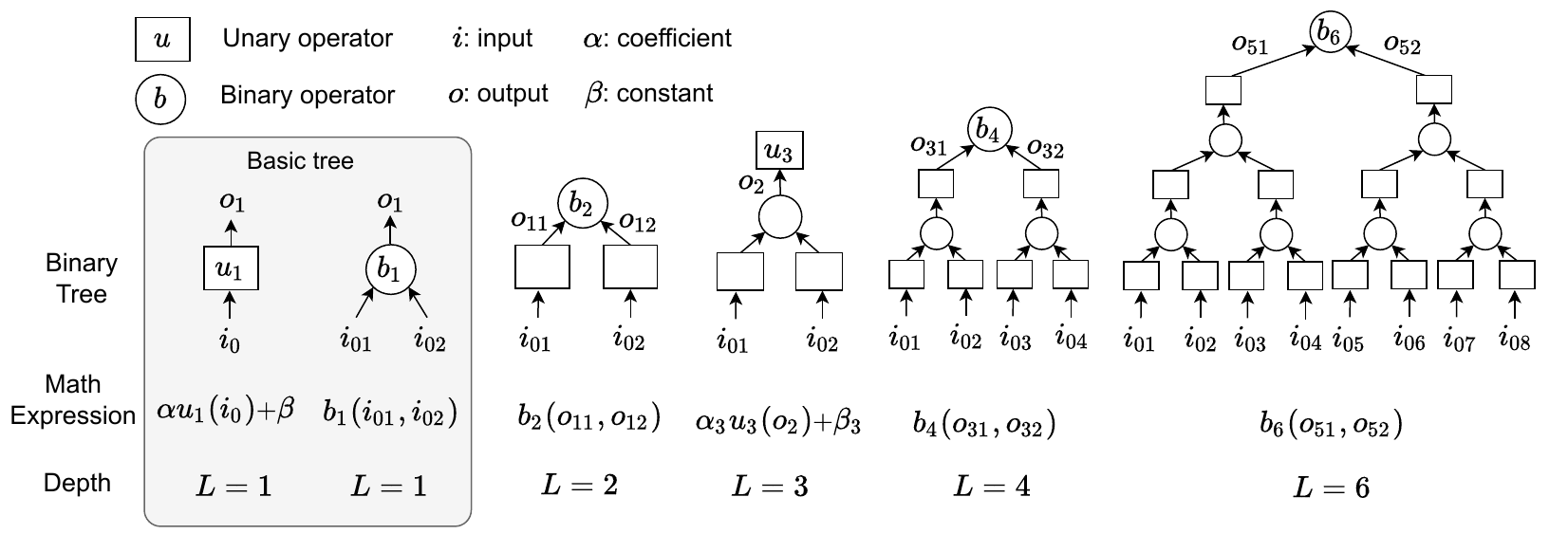}
    \caption{Computational expression tree~\cite{liang2022finite}. }
    \label{fig:comp_expr_tree}
\end{figure*}

To prepare the data for LLM processing, the symbolic expressions are tokenized and converted into postfix notation (Reverse Polish Notation). This format is chosen because it aligns well with the sequential input requirements of LLMs and eliminates the need for parentheses, simplifying the parsing process. More importantly, it reflects the relative positional relationship of operators in the expression. The tokenized dataset is then used to fine-tune foundational LLMs. During fine-tuning, the models are trained to predict the sequence of operators that constitute the symbolic solution to a new PDE. The training objective is to maximize the accuracy of operator sequence prediction, ensuring that the model can reliably identify the minimal set of operators required to represent the true solution.

Once fine-tuned, the LLM acts as a predictive tool to generate operator sets for new PDE problems. The predicted operator sets are highly refined, containing only the operators necessary to construct the solution, thereby eliminating redundant or irrelevant operators. This refinement is crucial for the subsequent application of the finite-expression method (FEX), a reinforcement learning–based symbolic regression technique that searches for the optimal combination of operators and constants to approximate the PDE solution. By providing FEX with a reduced and targeted operator set, the combinatorial search space is significantly narrowed. This reduction not only enhances the accuracy of the solution by focusing on relevant operators, but also dramatically improves computational efficiency by reducing the number of potential combinations that need to be evaluated. While we adopt FEX in this work, it is not the only viable approach. If the symbolic relationship between a PDE and its solution can be effectively learned, then a wide range of symbolic regression methods \cite{landajuela2022unified}, such as evolutionary algorithms \cite{schmidt2009distilling}, can be directly applied using the learned operator set.

In summary, the integration of LLMs into this workflow transforms the process of symbolic PDE solving. By generating structured datasets, fine-tuning models to predict operator sequences, and leveraging these predictions to guide PDE solvers, we achieve a more accurate and computationally efficient approach to discovering symbolic solutions for complex PDEs. This methodology bridges the gap between machine learning and symbolic mathematics, offering great potential for scientific and engineering applications. Particularly, our contributions are summarized as follows:
\begin{enumerate}
\item \textbf{PDE symbolic relations.} We formally pose the problem of discovering the symbolic relationship between the solution of a PDE and the associated problem data, including the boundary conditions and source functions. Establishing this connection can provide deeper insights into the underlying structure of PDEs, enabling more precise analytical solutions and facilitating the development of efficient numerical methods which provides high-quality analytical solutions. By highlighting the potential impact of such relationships, our work lays the foundation for further studies, aiming to bridge the gap between problem formulation and solution representation.

\item \textbf{LLM-guided symbolic PDE solvers.} We fine-tune LLMs to predict the operator set that captures the solution’s structure and couple these predictions with the FEX pipeline to search over symbolic programs that produce fully interpretable surrogates $u_\theta$. Experiments show that, while LLMs are not direct PDE solvers, their operator- and geometry-aware priors sharply prune the search over expressions, improving FEX’s sample efficiency and stability and yielding higher-quality closed-form or near–closed-form solutions across PDE families. The same operator-selection front end is modular and can be integrated with other symbolic PDE solvers where operator choice is the critical bottleneck.

\item \textbf{Theory for policy-gradient–based symbolic learning.} We provide convergence guarantees for a stochastic projected policy-gradient method that maximizes the physics-informed return over symbolic programs. In particular, under standard conditions, we prove that projected updates converge to first-order stationary policies and establish a sample complexity of $\mathcal{O}(\epsilon^{-4})$ to reach an $\epsilon$-stationary point in expectation, thereby placing the LLM+FEX pipeline on firm theoretical footing among state-of-the-art alternatives.
\end{enumerate}

\subsection{Related work}

Artificial intelligence (AI) for Science takes advantage of modern advanced machine learning (ML) methods to facilitate, accelerate and enhance scientific discovery \cite{stevens2020ai}. Its importance lies in the ability to handle massive datasets, uncover hidden patterns, and simulate complex phenomena. Inspired by the remarkable success of AI across various domains, deep neural network (DNN)-based approaches  \cite{lecun2015deep,goodfellow2016deep} have become increasingly popular for solving (high-dimensional) PDEs. These methods offer certain appealing advantages over traditional numerical techniques, making them a compelling alternative for tackling complex problems. First, DNN-based approaches empirically mitigate the curse of dimensionality that plagues conventional discretization approaches, leading to more efficiency in handling high-dimensional PDEs \cite{han2018solving}. Second, DNNs, acting as mesh-free universal function approximators \cite{cybenko1989approximation,hornik1989multilayer,goodfellow2016deep}, can be trained to learn highly complex PDE solutions without the need of prespecifying basis functions or meshes \cite{berg2018unified,raissi2019physics}. Hence, they are highly suitable to approximate complex solution spaces that challenges the traditional mesh-based approaches. Third, the solutions represented by DNNs can be evaluated efficiently once trained, which turns out to be a crucial property for tasks, including uncertainty quantification and real-time optimal control, require calling PDE solutions repeatedly \cite{zhu2019physics}. Last but not least, continuous progress in designing comprehensive neural network architectures, such as Fourier neural operators and advanced physics-informed neural network frameworks, alongside theoretical advancements in understanding their approximation capabilities, are making these methods increasingly applicable \cite{lu2019deeponet,li2020fourier}.

However, several critical factors often hinder deep neural networks (DNNs) from achieving highly accurate solutions, even for relatively simple problems. These include the need for large, diverse, and high-quality training data \cite{szegedy2013intriguing,buda2018systematic}, sensitivity to hyperparameter selection \cite{bergstra2012random}, the challenge of optimizing highly nonconvex objectives \cite{dauphin2014identifying,choromanska2015loss}, issues like vanishing and exploding gradients \cite{glorot2010understanding}, and poor generalization capabilities \cite{zhang2021understanding}. Additionally, solutions produced by DNNs often lack interpretability, preventing users from leveraging insights for future decision-making \cite{rudin2019stop}. To overcome these challenges and achieve both highly accurate and interpretable solutions for PDEs, \cite{liang2022finite} recently introduced the finite expression method (FEX). This approach leverages recent advances in approximation theory, which demonstrate that high-dimensional functions can be effectively approximated using functions composed of simple operators \cite{shen2021deep,shen2021neural,zhang2022deep}. This methodology seeks approximate PDE solutions within the space of functions composed of finitely many analytic expressions, effectively avoiding the curse of dimensionality. In FEX, the mathematical expression representing the PDE solution is modeled as a binary tree, where each node holds either a unary or binary operator chosen from pre-specfied operator sets. The goal is to find the optimal sequence of operators and coupling parameters that minimize the objective function associated with the PDE. This leads to a combinatorial optimization (CO) problem, which is then solved by a deep reinforcement learning (RL) method with the policy gradient optimizer. The FEX outperforms certain existing numerical PDE solvers in terms of accuracy, interpretability and memory efficiency, making it a promising approach for solving high-dimensional PDEs and other complex problems; see e.g., \cite{song2023finite,song2024finite,hardwick2024solving,liang2024stochastic}. 

\section{Theoretical Insight for Poisson Equations}\label{sec:theory}
While providing a complete answer to the question raised in the Introduction remains a significant challenge, in this section we take an initial step toward addressing it by offering theoretical insights that illuminate key aspects of the problem and motivate directions for future research. 

Specifically, we focus on the classical Poisson equation with Dirichlet boundary condition
\begin{equation}
    \label{eq-poisson}
    -\Delta u = f \quad \text{in } \Omega, \qquad u = g \quad \text{on } \partial \Omega,
\end{equation}
where $\Omega\subseteq \mathbb{R}^d$ is a compact domain. By analyzing this simple class of elliptic PDEs under, we aim to shed light on the structural properties and solution behavior relevant to the broader question, thereby laying the groundwork for more comprehensive theoretical development in future work.

\begin{theorem}
    \label{thm-poisson}
    Let $\Omega \subset \mathbb{R}^d$ be a compact domain with boundary $\partial \Omega$ such that the distance function $\mathcal{D}(x) := \mathrm{dist}(x, \partial \Omega)$ admits an explicit analytical expression. Suppose the source term $f \in C^{0,\alpha}(\Omega)$ for some $\alpha \in (0,1]$ (i.e., $f$ is H\"older continuous), and the boundary condition $g$ is the restriction to $\partial \Omega$ of a function in $C^2(\overline{\Omega})$. Let $u$ be the analytical solution to the Possion equation \eqref{eq-poisson}. Then for any $\delta \in (0,1)$, there exists an approximating function $\tilde{u}: \mathbb{R}^d \to \mathbb{R}$ such that:
\begin{enumerate}
    \item $\|u - \tilde{u}\|_{L^2(\Omega)} \leq \delta$,
    \item $\tilde{u}$ is constructed using only the elementary arithmetic operators $+,\;\times,\; (\cdot)^2$ and the operators appearing in the evaluation of $\mathcal{D}(\cdot)$, $f(\cdot)$, and $g(\cdot)$,
    \item The total number of operations needed to evaluate $\tilde{u}(x)$ at any $x \in \mathbb{R}^d$ satisfies:
    \[
    \texttt{ops}(\tilde{u}) = \texttt{poly}(\delta^{-1}) \cdot \left(\texttt{ops}(f) + \texttt{ops}(g)\right),
    \]
    where $\texttt{poly}(\delta^{-1})$ denotes a quantity polynomial in $\delta^{-1}$.
\end{enumerate}
\end{theorem}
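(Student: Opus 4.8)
The plan is to exploit that only $L^2$-closeness to $u$ is required, not that $\tilde u$ solve the PDE, while still producing an operation count that factors through evaluations of $f$ and $g$ and is robust to the dimension $d$. A naive Weierstrass polynomial would give items (1)--(2) but its cost is not transparently tied to $\texttt{ops}(f),\texttt{ops}(g)$ and suffers the curse of dimensionality. Instead I would build $\tilde u$ from the \emph{probabilistic (Feynman--Kac) representation} of the solution, which for $-\Delta u = f$, $u|_{\partial\Omega}=g$ reads $u(x)=\mathbb{E}^x\big[G(X_\tau)+\tfrac12\int_0^\tau f(X_s)\,ds\big]$, and discretize it by the walk-on-spheres (WoS) scheme. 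Here $G\in C^2(\overline\Omega)$ is the given extension of $g$, so that evaluating boundary data costs $\texttt{ops}(g)$; I first note $u\in C(\overline\Omega)$ by elliptic regularity, so the representation and the $L^2$ target both make sense.

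First I would fix a realization of WoS: choose deterministic unit directions $\xi_0,\xi_1,\dots\in S^{d-1}$ and iterate $x_{k+1}=x_k+\mathcal{D}(x_k)\,\xi_k$, using that the largest inscribed ball $B(x_k,\mathcal{D}(x_k))$ lies in $\Omega$ --- this is precisely where the explicit distance function is used. The walk is stopped when $\mathcal{D}(x_k)<\epsilon$, contributing the boundary term $G(x_k)$; along the way each ball contributes an unbiased single-sample estimate of $\int_{B} G_{B}(x_k,\cdot)f$ using the \emph{explicit} ball Green's function, which requires one evaluation of $f$ per ball and weights assembled from squared norms $|z|^2=\sum_i z_i^2$. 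Averaging $M$ independent such walks gives a deterministic $\tilde u(x)=\tfrac1M\sum_{i} \hat u_i(x)$ that is a composition of $+,\times,(\cdot)^2$ with the operators inside $\mathcal{D},f,g$, establishing item (2). Derandomization is by the probabilistic method: I would bound $\mathbb{E}\,\|\tilde u-u\|_{L^2(\Omega)}^2$ and conclude that some realization meets the bound $\delta$.

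The error splits into bias and variance. The bias comes only from stopping in the $\epsilon$-shell and is controlled by $\omega_u(\epsilon)+L_G\epsilon$ using continuity of $u$ up to $\partial\Omega$ and the Lipschitz bound on $G$ (the in-ball source estimator is unbiased); the variance of one walk is finite because $f$ is bounded on the compact $\Omega$ and $\int_B G_B$ is $\mathcal{O}(\mathrm{diam}(\Omega)^2)$, so averaging $M$ walks scales it by $1/M$. Choosing $\epsilon$ small and $M=\texttt{poly}(\delta^{-1})$ forces $\mathbb{E}\,\|\tilde u-u\|_{L^2}^2\le\delta^2$, giving item (1); counting $M$ walks times $\mathcal{O}(\log\epsilon^{-1})$ steps, each costing $\texttt{ops}(f)$ plus $\mathcal{O}(d)$ arithmetic, and one $\texttt{ops}(g)$ per walk, yields item (3). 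The main obstacle is the walk's termination and the associated boundary layer: proving that WoS reaches the $\epsilon$-shell in $\mathcal{O}(\log\epsilon^{-1})$ expected steps (so the step count is $\texttt{poly}(\delta^{-1})$) and that the single-walk variance is dimension-robust requires geometric regularity of $\partial\Omega$ beyond mere compactness --- e.g.\ an exterior cone/sphere condition --- and care that the radial Green's-function weights, which carry non-integer powers for general $d$, are realized within the allowed operator set (reducing to the square roots already present in $\mathcal{D}$, or approximated to $\texttt{poly}(\delta^{-1})$ accuracy).
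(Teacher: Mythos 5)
Your proposal follows essentially the same route as the paper's own proof: both rest on the probabilistic walk-on-spheres representation of the Poisson solution --- a boundary term $\mathbb{E}_x[g(X_{\tau(\Omega)})]$ plus a source term assembled from $\mathcal{D}(\cdot)^2$ and ball-wise evaluations of $f$ --- discretized by Monte Carlo averaging over finitely many walks, with the number of walks and steps taken polynomial in $\delta^{-1}$ to obtain the $L^2$ bound and the operation count. If anything, your write-up is more explicit than the paper's about the derandomization step, the bias--variance split, and the geometric/Green's-function caveats that the paper delegates to its citations (Kyprianou et al.\ for the representation, Grohs et al.\ for the polynomial sample bounds).
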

\begin{proof}
Let $W_t,\; t\geq 0$ be a $d$-dimensional Brownian motion such that $W_t$ is $F_t$-measurable with probability measure $P$. We define the stochastic process
    \[
        X_t = X_0 + W_t, \quad t\geq 0,
    \]
    and denote by $P_x$ the probability measure conditioned on $X_0 = x$ for any $x\in \mathbb{R}^d$, and the expectation with respect to $P_x$ is denoted as $\mathbb{E}_x[\cdot]$. Accordingly, we define the discrete process $\bar{X}_k,\; k\geq 0$ as follows:
    \[
        \bar{X}_k = \bar{X}_{k-1} + Y_k \mathcal{D}(\bar{X}_{k-1}),\quad k\geq 1, \quad X_0 = x,
    \]
    where $Y_k, \; k\geq 1$ is indpendently, indentically and uniformly distribution on the unit sphere according to $X_{\tau(B(0,1))}$ with respect to a probability distribution $P_0$. Obviously, $\bar{X}_k$ depends on $x, Y_1, \dots, Y_k$, hence to emphasize this dependency, we also dentoe  
    \[
    \bar{X}_k:= \bar{X}_k(x, Y_1,\dots, Y_k),\quad k\geq 1.
    \]
    For any open, nonempty set $\mathcal{X}\subseteq \mathbb{R}^d$, we define the first exit time of the process $X_t$ starting at $x\in \mathcal{X}$ from $\mathcal{X}$ by 
    \[
        \tau(\mathcal{X}):= \inf\left\{t\geq 0\;:\; X_t \notin \mathcal{X}\right\}.
    \]
    Then, from \cite{kyprianou2018unbiased}, we see that for $x\in \Omega $, the solution $u(x)$ for the Poisson equation admits the following expression:
    \begin{equation}
        \label{eq-sol-poisson}
        \begin{aligned}
            &\;u(x) =  \mathbb{E}_x\left[g(X_{\tau(\Omega)})\right] + \frac{1}{2}\mathbb{E}_x\left[\sum_{k\geq 1}\mathcal{D}(\bar{X}_{k})^2E_0\right],
        \end{aligned}
    \end{equation}
    where 
    \[
        E_0 := \mathbb{E}_0\left[\int_0^{\tau(B(0,1))}f\left(\bar{X}_{k-1}+D(\bar{X}_{k-1})\cdot t\right)dt\right].
    \]

    We first consider approximating the first term in \eqref{eq-sol-poisson} using the Monte-Carlo sampling technique. To this end, we define the function $g^{(M, \{K_i\})}:\mathbb{R}^d\to \mathbb{R}$ as 
    \[
        g^{(M, \{K_i\})}(x):= \frac{1}{M}\sum_{i=1}^M g\left(\bar{X}_{K_i}(x, Y_{i,1},\dots, Y_{i, K_i})\right),
    \]
    where $M \geq 1$, $K_i\geq 1$ for $i = 1,\dots, M$, and $Y_{i,k},\; k = 1,\dots, K_i$ are unit vectors. Then, for any $\delta \in (0,1)$, there exist $M $ and $K_i$, $1\leq i\leq M$, all polynomial in $\delta^{-1}$ \cite{grohs2022deep}, such that 
    \[
       \sqrt{\int_\Omega \left(\mathbb{E}_x\left[g(X_{\tau(\Omega)})\right] - g^{(M, \{K_i\})}(x)\right)^2 dx }\leq \delta.
    \]
    
    Similarly, we consider approximating the second term in \eqref{eq-sol-poisson} using the Monte-Carlo sampling technique. Let $Y_{i,k},\; k = 1,\dots, K_i$  and $y_{i,j,k},\; k = 1,\dots, K_i,\; j = 1,\dots, J$ be random unit vectors, for $i = 1,\dots, M$, with $M, K_i,J$ being positive integers, then for any $\delta > 0$, one can show that the following function $f^{(M,\{K_i\},J)}:\mathbb{R}^d\to \mathbb{R}$ defined by
    \[
        \begin{aligned}
            f^{(M,\{K_i\},J)}(x) 
            :=  \frac{1}{MJ}\sum_{i=1}^M\sum_{k=1}^{M_i}\sum_{j=1}^J\mathcal{D}\left(\bar{X}_{k}(x, Y_{i,1},\dots, Y_{i, k})\right)^2
              \times  f\left(Z_{k-1}\right),
        \end{aligned}
    \]
    where 
    \[
        \begin{aligned}
            Z_{k-1}:= \bar{X}_{k-1}(x, Y_{i,1},\dots, Y_{i, k-1}) 
             + \mathcal{D}(\bar{X}_{k-1}(x, Y_{i,1},\dots, Y_{i, k-1}))y_{i,j,k},
        \end{aligned}
    \]
    can approximate the second term \eqref{eq-sol-poisson} in sense that 
    \[
        \sqrt{\int_\Omega \left(\mathbb{E}_x\left[\sum_{k\geq 1}\mathcal{D}(\bar{X}_{k})^2E_0\right] - f^{(M,\{K_i\},J)}(x) \right)^2dx}\leq \delta,
    \]
    where 
    \[
        E_0:= \mathbb{E}_0\left[\int_0^{\tau(B(0,1))}f\left(\bar{X}_{k-1}+\mathcal{D}(\bar{X}_{k-1})\cdot t\right)dt\right],
    \]
    if one chooses suitable $M,\; J$ and $K_i,\; i = 1,\dots, M$ all polynomial in $\delta^{-1}$ \cite{grohs2022deep}. Therefore, the proof is completed.
\end{proof}

The objective of the above theorem is to establish a proof-of-concept framework to guide symbolic discovery of PDE solutions using modern machine learning approaches, including LLMs. This theorem offers a partial constructive resolution to the open research question posed in the previous section: it identifies the specific operators needed to represent a $\delta$-accurate approximation of the PDE solution $u$ in terms of the operators used to define the source term $f$, the boundary condition $g$, and the domain geometry via the distance function $\mathcal{D}$. The central insight is that an accurate approximation $\tilde{u}$ can be constructed using only a limited set of algebraic operations, along with the computational building blocks inherent in $f$, $g$, and $\mathcal{D}$. This result carries several important implications. (1) In contexts where one seeks symbolic models for physical phenomena, the theorem guarantees the existence of a low-complexity symbolic approximation of $u$ that relies solely on a structured and interpretable set of primitive operations. (2) It also informs the design of neural architectures for PDE approximation, suggesting that imposing operator priors, i.e., by reusing the operators present in $f$ and $g$, is sufficient to achieve arbitrarily high approximation accuracy, thereby promoting efficiency and interpretability. (3) The polynomial bound on computational complexity with respect to $\delta^{-1}$ underscores the practical feasibility and scalability of such approximations. Taken together, these insights lay a foundation for a broader theory of \textit{operator-preserving approximations} for general PDEs, with significant potential impact in scientific machine learning, computational physics, and algorithmic model design. We plan to pursue a unified analysis for more general classes of PDEs in future work, and our current numerical results on linear conservation laws already provide clear evidence supporting this direction.

\section{Methodology} \label{sec:LLM} 

In this section, we propose a novel method that fine-tunes large language models (LLMs) to predict the operator sets present in the symbolic expressions of partial differential equation (PDE) solutions. Our approach consists of three main stages: data generation, model fine-tuning, and performance evaluation. Below, we detail the pipeline for generating synthetic data, the procedure for fine-tuning the LLMs, and the evaluation of their predictive capabilities.

\subsection{Binary Computational Trees for Expressions} \label{subsection-exp-tree}
Expressions in PDEs are represented as \textbf{binary computational trees}, as depicted in Figure \ref{fig:comp_expr_tree}. These trees provide a structured and hierarchical way to encode mathematical expressions, enabling efficient exploration and manipulation of the solution space. 

The construction of the tree proceeds in two main stages.
\textbf{Depth Specification}: The maximum depth of the tree is specified in advance, determining the number of hierarchical levels and thereby controlling the expressive capacity of the resulting representations. While deeper trees enable the generation of more complex expressions, they also entail higher computational costs.
\textbf{Recursive Construction}: Given the specified depth, the tree is constructed recursively. Starting from the root node, each node is expanded by generating child nodes according to predefined operator rules. 

Each node in the computational tree encodes an operator, which may be either a \textit{unary} or \textit{binary} operator.
\textbf{Unary Operators}: Unary operators act on a single operand and include a variety of nonlinear transformations such as trigonometric functions ($\sin(\cdot)$, $\cos(\cdot)$, $\tan(\cdot)$), exponential functions ($\exp(\cdot)$), logarithmic functions ($\lg(\cdot)$, $\ln(\cdot)$), and other elementary operations (e.g., $\sqrt{\cdot}$, $|\cdot|$, and power functions $(\cdot)^k$ for $k \in \mathbb{Z}$). Each unary operator node is augmented with two scalar parameters, $\alpha$ and $\beta$, which enable the composition of the operator with an affine transformation, yielding expressions of the form $\alpha \cdot u(\cdot) + \beta$, where $u(\cdot)$ denotes the unary operation. This parameterization allows the tree to flexibly represent both nonlinear and linearly adjusted transformations, thereby enhancing its expressiveness.
\textbf{Binary Operators}: Binary operators operate on two operands and include the standard arithmetic operations: addition ($+$), subtraction ($-$), multiplication ($\times$), and division ($/$). Each binary operator node combines the outputs of its left and right child subtrees to produce more complex expressions. For example, a multiplication node may combine two distinct sub-expressions, thereby enabling the construction of richer symbolic forms through hierarchical composition.

\subsection{Data Generation for Equation Types} \label{subsec:data_gen}

We next describe the data generation process for constructing a structured dataset of symbolic expressions derived from various PDE types. The data generation process is outlined in Figure \ref{fig:data-pipeline-new}.

\begin{figure}[htb!]
    \centering
    \includegraphics[width=0.98\textwidth]{./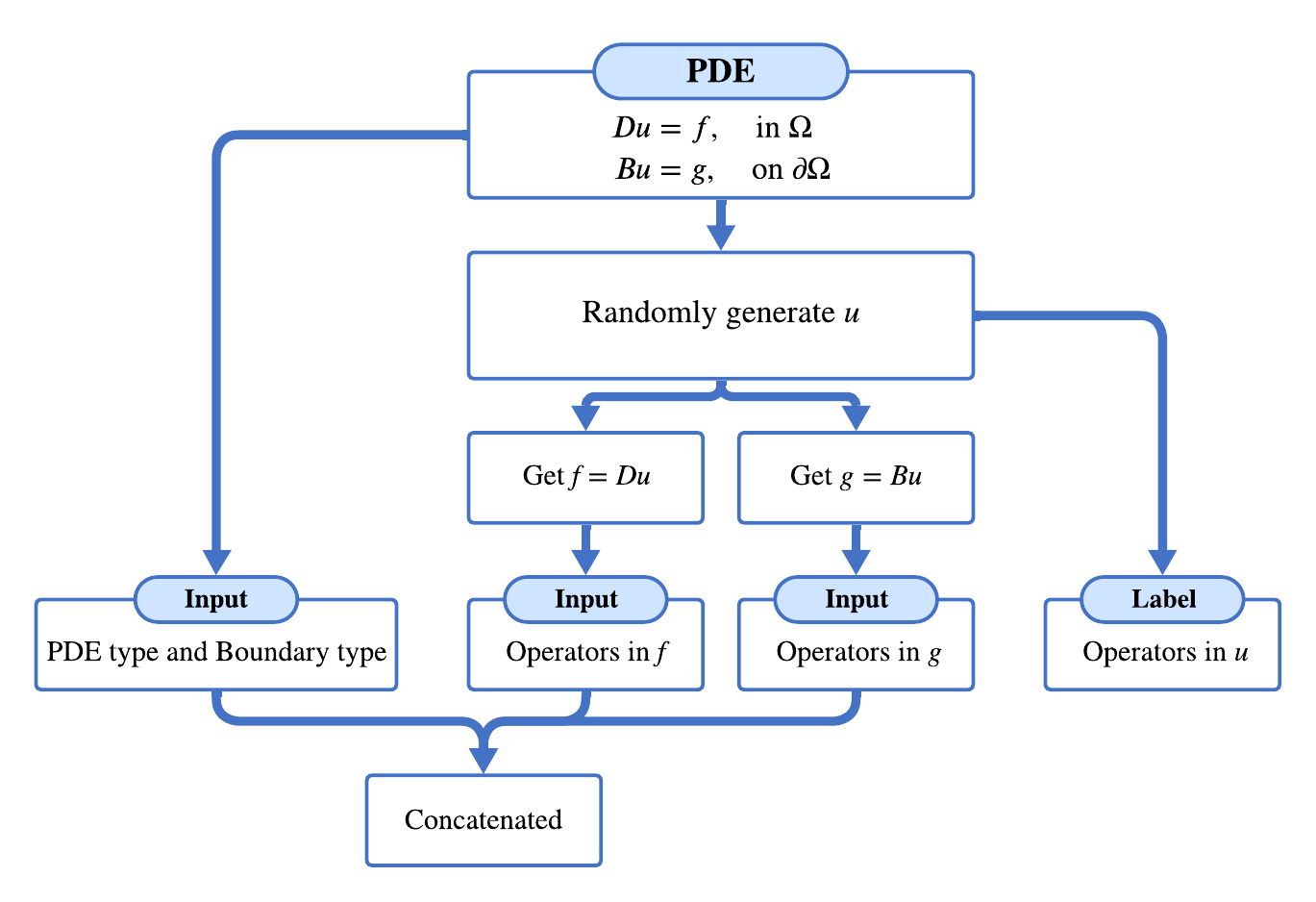}
    \caption{Data generation pipeline.}
    \label{fig:data-pipeline-new}
\end{figure}

Specifically, the construction of a single data point consists of the following steps: \textbf{PDE Type Specification.}
We begin by specifying the type of PDE and the type of boundary condition, based on the associated differential and boundary operators $D$ and $B$. This information forms the first component of the data point, identifying the PDE and boundary condition types. \textbf{Generation of Random Function $u$.}
Next, we construct a random solution function $u$ using a binary computational tree of predefined depth, as described in Section~\ref{subsection-exp-tree}. The tree is populated with randomly selected unary and binary operators from a sufficiently expressive operator set. This yields a symbolic expression for $u$, from which we compute the corresponding right-hand-side function $f$ and boundary function $g$, together forming the second component of the data point. \textbf{Postfix Representation.}
To facilitate processing by large language models (LLMs), the expressions for $u$, $f$, and $g$ are converted into postfix notation. This format simplifies tokenization, avoids the use of parentheses, and explicitly encodes operator precedence through position. Importantly, it captures the structural relationships among operators, which helps LLMs learn symbolic patterns. The postfix expressions of $f$ and $g$ constitute the model input, while the postfix form of $u(x)$ serves as the output label.

\subsection{Fine-tuning LLM}\label{subsec:train_llm}
With a generated data set, we can then fine-tune a certain LLM to learn the symbolic relation in PDEs. The whole pipeline is summarized in Figure \ref{fig:fine_tune_pipeline}, and we will describe each component in detail for the rest of this subsection.

\begin{figure*}[htb!]
  \centering
  \includegraphics[width=0.98\textwidth]{./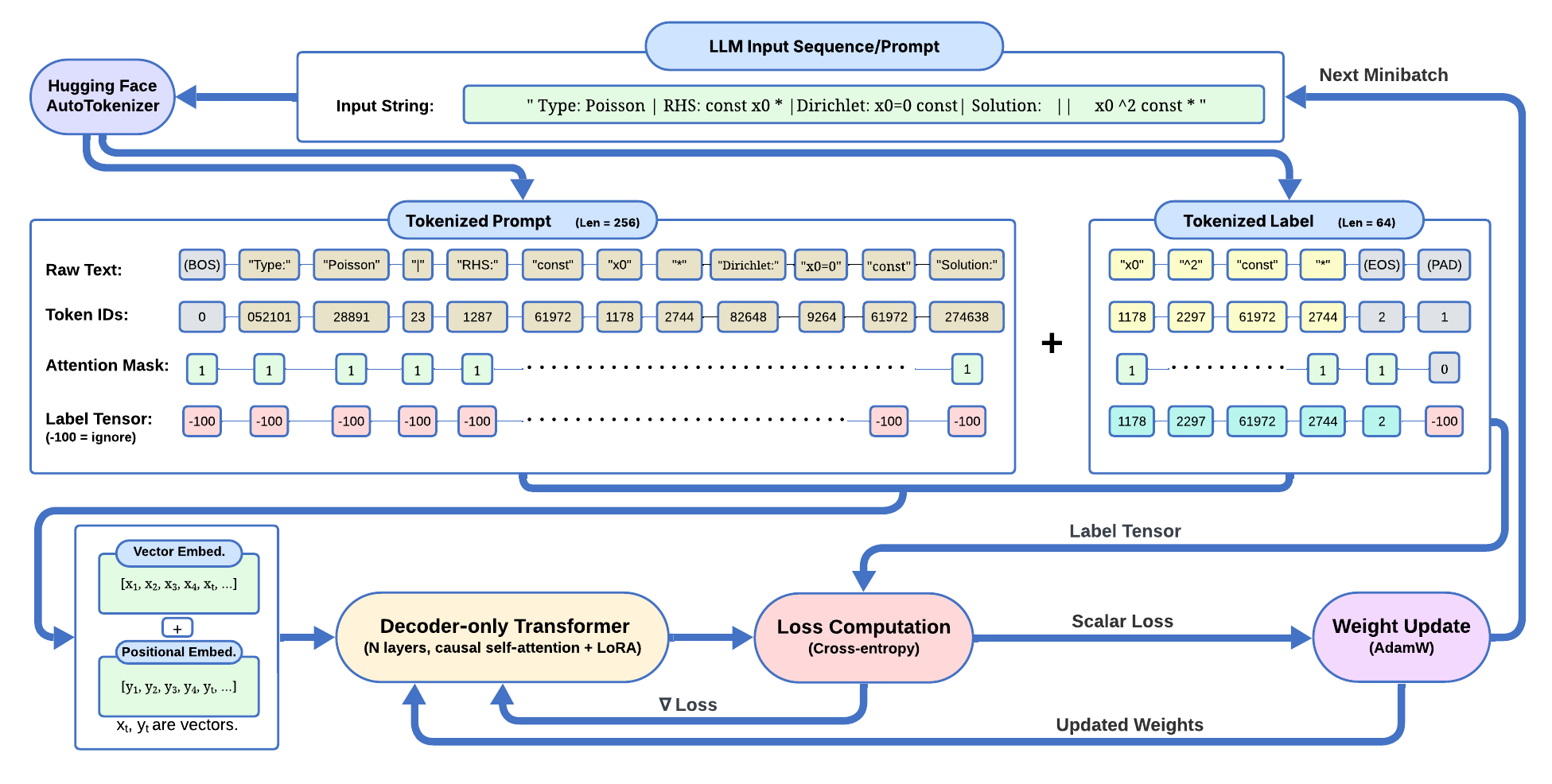}
  \caption{Overview of the fine-tuning pipeline}
  \label{fig:fine_tune_pipeline}
\end{figure*}

\textbf{Model Selection.} Encoder-decoder and decoder-only architectures are two fundamental types of LLMs, each with distinct design principles and strengths. \emph{Encoder-decoder models} (e.g., T5 \cite{raffel2020exploring} and BART \cite{lewis2019bart}) consist of an encoder that transforms the entire input sequence into a rich latent representation, and a decoder that generates the output sequence token by token, attending to the encoder's outputs via cross-attention. This architecture is well-suited for tasks such as machine translation, summarization, and question answering, where understanding the full input before generating the output is crucial. In contrast, \emph{decoder-only models} (e.g., Llama3 \cite{grattafiori2024llama}) treat the input and output as a single continuous sequence and generate tokens autoregressively using causal self-attention, meaning each token can only attend to previous tokens. This design simplifies the model architecture and training process, and has proven particularly effective for open-ended context generation and few-shot prompting. Based on our empirical results, decoder-only models are preferable for our task, as they more accurately predict the symbolic operators associated with PDE solutions.

\textbf{Tokenization.} We next perform tokenization to convert raw text into tokens that the model can process numerically, enabling efficient representation and learning. Specifically, each data point (in raw text) is serialized into a prompt followed by its target operator sequence, as demonstrated in the following.
\[
    \begin{aligned}
        &\; \underbrace{\texttt{Type:}<\mathrm{PDE}>\,\texttt{|}\,\texttt{RHS:}<\mathrm{ops}>\,\texttt{|}\, \texttt{BC\_Type:}<\mathrm{bc}>\,\texttt{|}\,\texttt{Solution:}
    }_{\text{PROMPT}} \\
    &\; \underbrace{\texttt{<op}_{1}\ \dots\ \texttt{op}_{n}\,\texttt{<EOS>}\texttt{>}
    }_{\text{TARGET}}
    \end{aligned}
\]
In this work, we intentionally adopted a simple and minimal prompt design to isolate the core capabilities of the fine-tuned language model and avoid introducing prompt-specific biases. However, incorporating explicit task descriptions or more structured prompts could potentially enhance performance, and exploring such prompt engineering strategies is a valuable research direction. 

Next, a pretrained tokenizer is applied to convert the input text into its corresponding token sequence. In this work, we use the \texttt{SentencePiece}~\footnote{Available at \url{https://github.com/google/sentencepiece}.} tokenizer and explicitly set the PAD token to match the EOS token, ensuring that padding positions are never treated as valid outputs. To accommodate memory constraints, we truncate the prompt to a maximum of 256 tokens and the target to a maximum of 64 tokens, resulting in a concatenated sequence length of 300. Additionally, we construct a binary attention mask, where real tokens are marked with 1 and padding positions with 0. This mask is supplied to the model to ensure that self-attention computations ignore padded positions.

\textbf{Fine-tuning.} Given a pretrained decoder-only language model and a tokenized dataset, we proceed to the fine-tuning phase, where the model’s parameters are updated via stochastic gradient-based optimization. Specifically, we minimize the cross-entropy loss between the model's predicted token distributions and the ground truth tokens in the training data. This objective encourages the model to assign higher probabilities to correct next-token predictions, thereby adapting its knowledge to the target task or domain. For optimization, we use the \texttt{AdamW} optimizer, a variant of Adam \cite{kingma2014adam} that decouples weight decay from the gradient updates, which has been shown to improve generalization performance in transformer-based models. To further enhance the efficiency of training, we adopt several computational techniques. First, we employ mixed-precision training, which allows computations to be performed in lower precision (e.g., float16) while maintaining a decent level of stability and accuracy. This significantly reduces memory usage and accelerates training. Second, we leverage Low-Rank Adaptation (LoRA) \cite{hu2022lora}, a parameter-efficient fine-tuning technique that injects trainable low-rank matrices into specific layers (typically attention or feedforward layers) of the pretrained model while keeping the original model weights frozen. This approach greatly reduces the number of trainable parameters and memory footprint during training, making it especially suitable for large-scale models and resource-constrained environments. Together, these strategies enable effective and efficient fine-tuning of large language models on our tasks.

\subsection{Inferencing} \label{subsection-inferencing}
Once fine-tuned, the language model is capable of predicting the postfix representation of previously unseen partial differential equation (PDE) solutions. Given a new input instance, comprising the PDE type, its right-hand side, and the corresponding boundary conditions, the input is first tokenized and then fed into the model. The decoder will auto-regressively generate a sequence of tokens that represents the solution in postfix  notation, encoding the sequence of mathematical operators and operands that define the PDE solution.

Since the model’s raw output is a token sequence, a post-processing pipeline then extracts a clean and interpretable set of predicted operators. The first step involves identifying the unique set of operators by removing duplicate tokens from the sequence. Following this, we perform error checking to detect and eliminate misspelled or malformed tokens that may result from tokenization artifacts or character-level prediction errors (issues that can arise because the model learns statistical patterns rather than enforcing strict syntactic correctness). Such errors are particularly likely when dealing with rare or domain-specific mathematical symbols. Any invalid or unrecognized tokens are subsequently discarded to ensure the integrity of the final operator set.

To measure the model’s predictive accuracy, we compute the squared $\ell_2$-norm of the difference between the predicted and ground-truth operator sets. Specifically, we encode each operator set as a binary vector over a fixed dictionary of $n$ possible operators, where each vector component indicates the presence or absence of a given operator. This encoding transforms the comparison into a well-defined vector space problem. An illustrative example of this binary encoding scheme with
\[
    h_1(x_1,x_2):=  (5x_1)^2 + \sin(3x_2) \cdot x_2,\quad  h_2(x_1):= 5\exp(2x_1) \cdot (\cos(6x_1))^3,
\]
is showed in Table~\ref{tab:vector_encoding_examples}. Then, the squared distance between two such operator sets, represented by two binary vectors $y \in \mathbb{R}^n$ and $z \in \mathbb{R}^n$, can be defined as $\|y-z\|^2 = \sum_{i=1}^n \left( y_i - z_i \right)^2.$  Clearly, it measures the number of mismatched operators between the two operator sets. The resulted mismatch can then be computed as $7$, indicating that the two functions are very different from each other.

\begin{table*}[htb!]
\centering
\begin{tabular}{ll *{9}{c}}
\multicolumn{2}{c}{} & \multicolumn{9}{c}{\textbf{Binary Vector}} \\
\cmidrule(lr){3-11}
\textbf{} & \textbf{Operator Set} &
\rotatebox{0}{\fontsize{7pt}{6pt}\selectfont $x_1$} & 
\rotatebox{0}{\fontsize{7pt}{6pt}\selectfont $x_2$} & 
\rotatebox{0}{\fontsize{7pt}{6pt}\selectfont \^{}2} &
\rotatebox{0}{\fontsize{7pt}{6pt}\selectfont \^{}3} &
\rotatebox{0}{\fontsize{7pt}{6pt}\selectfont +} &
\rotatebox{0}{\fontsize{7pt}{6pt}\selectfont *} &
\rotatebox{0}{\fontsize{7pt}{6pt}\selectfont SIN} &
\rotatebox{0}{\fontsize{7pt}{6pt}\selectfont COS} &
\rotatebox{0}{\fontsize{7pt}{6pt}\selectfont EXP} \\
\midrule
$h_1$ & \texttt{[ $x_1$, $x_2$, \^{}2, +, *, SIN ]} & 1 & 1 & 1 & 0 & 1 & 1 & 1 & 0 & 0 \\[6pt]
$h_2$ & \texttt{[ $x_1$, \^{}3, *, COS, EXP ]} & 1 & 0 & 0 & 1 & 0 & 1 & 0 & 1 & 1 \\[5pt]
\bottomrule
\end{tabular}
\caption{Sample binary‐vector encoding for two example expressions.}
\label{tab:vector_encoding_examples}
\end{table*}

\section{LLM-Informed Finite Expression Method} \label{sec:fex}

In this section, we integrate the predictive operator set extracted from PDE data, namely the forcing $f$, boundary data $g$, and geometric/metrical descriptors of $\partial\Omega$, into an efficient symbolic-learning framework for solving PDEs: the finite-expression (FEX) method \cite{liang2022finite}. The LLM-informed FEX pipeline for solving PDEs is illustrated in Fig.~\ref{fig:llmfex}

\begin{figure}[htb!]
    \centering
    \includegraphics[width=0.98\linewidth]{./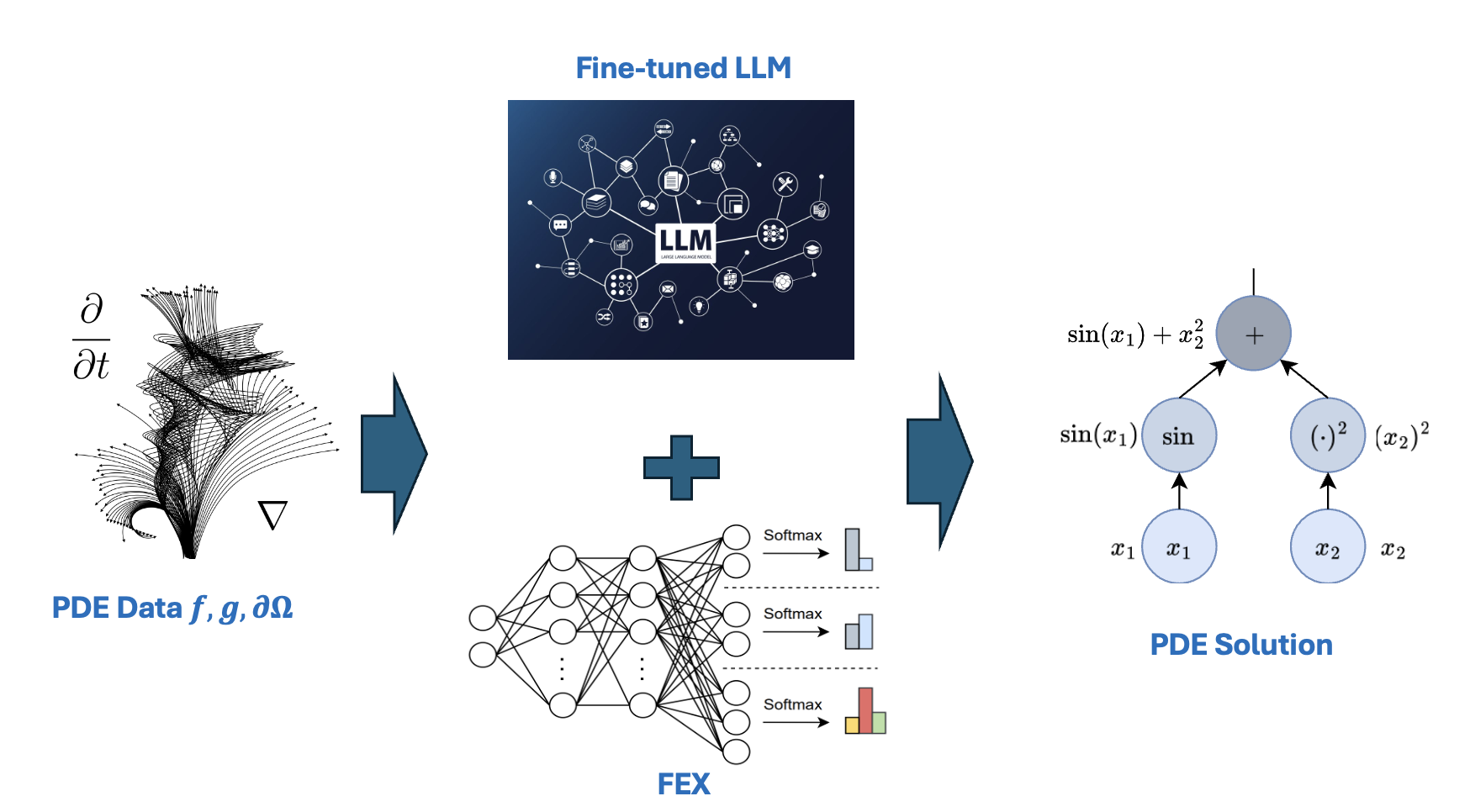}
    \caption{LLM-informed FEX for interpretable PDE solutions.}\label{fig:llmfex}
\end{figure}

FEX poses the construction of fully interpretable PDE solutions as a reinforcement learning search over an operator library; a policy samples symbolic programs whose execution yields an explicit surrogate $u_\theta$, and the return is a physics-informed score that penalizes interior residuals and boundary mismatches, as defined in \eqref{eq-least-squared}. Although FEX has demonstrated strong practical performance for high-dimensional PDEs, the convergence behavior of its policy-gradient optimization has remained unclear. We close this gap by establishing conditions under which policy-gradient iterates converge to stationary policies, thereby providing the novel convergence guarantees for symbolic PDE learning with policy gradients and placing the overall pipeline on firm theoretical footing among state-of-the-art alternatives.

After feeding the PDE data to the fine-tuned LLM, we obtain a predicted operator set that, together with a computational expression tree $\mathcal{T}$ (see Fig.~\ref{fig:comp_expr_tree}), induces a hypothesis class $\mathbb{S}$. We seek
\[
u^* \in \operatorname*{arg\,min}_{u\in \mathbb{S}} L(u).
\]
Writing $\mathbb{S}=\{u_\theta:\theta\in\Theta\subseteq\mathbb{R}^m\}$, each candidate $u\in\mathbb{S}$ is represented by a (possibly mixed discrete/continuous) parameter $\theta\in\Theta$. We model search over $\Theta$ with a stochastic policy $\theta\sim p_\phi$, where $p_\phi$ is a distribution on $\Theta$ parameterized by $\phi\in\Phi$ and $\Phi\subseteq\mathbb{R}^n$ is nonempty, closed, and convex. Defining $R(\theta):=(1+L(u_\theta))^{-1}\in[0,1]$, we can recast model selection as
\begin{equation}\label{eq-reward}
\max_{\phi\in\mathbb{R}^n}\; F(\phi)\;:=\; \mathbb{E}_{\theta\sim p_\phi}\!\big[R(\theta)\big]\;-\;\delta_\Phi(\phi),
\end{equation}
with $\delta_\Phi$ the indicator of $\Phi$ ($0$ on $\Phi$, $+\infty$ otherwise). We let $J(\phi):=\mathbb{E}_{\theta\sim p_\phi}[R(\theta)]$ for notational simplicity. Assuming $p_\phi$ has $\phi$-independent support, is differentiable in $\phi$, $R$ is integrable, and $\mathbb{E}_{p_\phi}\!\big[\|R(\theta)\nabla_\phi \log p_\phi(\theta)\|\big]<\infty$ (so that $\nabla$ and $\mathbb{E}$ interchange), the gradient of $J$ is easily obtained as follows:
\[
\nabla J(\phi)=\mathbb{E}_{\theta\sim p_\phi}\!\big[R(\theta)\,\nabla \log p_\phi(\theta)\big].
\]
Problem~\eqref{eq-reward} is generally nonconvex, so we can only target the first-order stationarity, without assuming restricted additional conditions. In particular, a point $\phi\in\mathbb{R}^n$ is stationary if
\[
0 \in -\nabla J(\phi) + N_\Phi(\phi),
\]
where $N_\Phi(\phi):=\{\,g\in\mathbb{R}^n:\langle g,\phi'-\phi\rangle\le 0,\ \forall\,\phi'\in\Phi\,\}$ is the normal cone of $\Phi$ at $\phi$ \cite{convexanalysis}. We call a (possibly random) iterate $\hat\phi$ an \emph{$\epsilon$-stationary point in expectation} if, for a sequence $\{\phi_t\}_{t=0}^T$ generated by a stochastic optimization method and a standard output rule, i.e., uniformly at random from $\{1,\dots,T\}$, it holds that
\[
\mathbb{E}_T\left[\mathrm{dist}\left(0, -\nabla J(\hat\phi) + N_\Phi(\hat\phi)\right)^2\right]\;\le\;\epsilon^2,
\]
where the expectation is over all algorithmic randomness (policy sampling, minibatching, and the random output selection).

With the above preparation, we can now apply the stochastic projected policy gradient method (SPPGM) for finding the optimal policy. The template of the SPPGM is presented in Algorithm \ref{alg-sppgm}.

\begin{algorithm}[htb!]
\caption{Stochastic Projected Policy Gradient Method (SPPGM)}
\label{alg-sppgm}
\begin{algorithmic}[1]
\Require Feasible set $\Phi\subseteq\mathbb{R}^n$, projection $\Pi_\Phi(\cdot)$, initial parameter $\phi_0\in\Phi$, step sizes $\{\eta_t\}_{t\ge0}$, batch size $B$
\For{$t=0,1,2,\ldots, T-1$}
    \State \textbf{Sampling:} Draw i.i.d.\ $\theta_1,\ldots,\theta_B \sim p_{\phi_t}$
    \State \textbf{Rewards:} Evaluate $r_i \gets R(\theta_i)$ for $i=1,\ldots,B$
    \State \textbf{Score gradients:} Compute $\;s_i \gets \nabla_\phi \log p_\phi(\theta_i)\big|_{\phi=\phi_t}$
    \State \textbf{Gradient estimate:} $g_t \;\gets\; \frac{1}{B}\sum_{i=1}^B r_i s_i$
    \State \textbf{Stochastic Gradient step:} $\;\tilde{\phi}_{t+1} \gets \phi_t + \eta_t\, g_t$
    \State \textbf{Projection:} $\;\phi_{t+1} \gets \Pi_\Phi\big(\tilde{\phi}_{t+1}\big)$ \Comment{$\Pi_\Phi(x):=\arg\min_{y\in\Phi}\;\tfrac{1}{2}\|y-x\|^2$}
    \State \textbf{Stopping:} if a criterion holds, \textbf{break}
\EndFor
\State \textbf{Output:} $\hat{\phi}_T$ is selected uniformly at random from the generated sequence $\{\phi_t\}_{t=1}^{T}$
\end{algorithmic}
\end{algorithm}

We end this section by presenting the following theorem, which shows that the SPPGM is able to return an $\epsilon$-stationary point in expectation with a constant learning rate by assume that $J(\phi)$ is $L$-smooth, i.e., for any $\phi',\phi$, there exists a constant $L>0$ such that 
\[
    \|\nabla J(\phi') - \nabla J(\phi)\|\leq L\|\phi' - \phi\|. 
\]
It is well-known (see, e.g., \cite{beck2017first}) that for any $L$-smooth function $J(\phi)$, it holds that 
\begin{equation}
    \label{eq-lsmooth}
    J(\phi')\geq J(\phi) + \nabla J(\phi)^T(\phi' - \phi) - \frac{L}{2}\|\phi' - \phi\|^2,\quad \forall\; \phi',\phi \in\mathbb{R}^n.
\end{equation}

\begin{theorem}
    Suppose that the function $J(\cdot)$ is $L$-smooth with $L>0$, $\eta_t = \eta\in (0, \tfrac{1}{2L})$ for all $t\geq 0$, and there exists a constant $\sigma$ such that
    \[
        \mathbb{E}_{\theta\sim p_\phi}\left[\|R(\theta)\nabla\log p_\phi(\theta) - \nabla J(\theta)\|^2\right]\leq \sigma^2
    \]
    then the Algorithm \ref{alg-sppgm} outputs a point $\hat\phi_T\in\Phi$ satisfying
    \begin{equation}
        \label{eq-convergence}
        \begin{aligned}
            &\; \mathbb{E}_T\left[\mathrm{dist}\left(0, -\nabla J(\hat\phi_T) + N_\Phi(\hat\phi_T)\right)^2\right] \\ 
            \leq &\; \left(2 + \frac{2}{\eta L(1-2\eta L)}\right)\frac{\sigma^2}{B} + \frac{\Delta }{T}\left(\frac{2}{\eta}+\frac{4}{\eta(1-2\eta L)}\right),
        \end{aligned}
    \end{equation}
    where $\Delta := F^*-F(\phi_0) > 0$ with $F^*$ being the global minimum of the problem \eqref{eq-reward}. Moreover, for any given $\epsilon > 0$, if one chooses 
    \[
        B := \left \lceil \frac{\sigma^2}{\epsilon^2}\left(4 + \frac{4}{\eta L(1-2\eta L}\right)\right \rceil, \quad T = \left \lceil \frac{\Delta}{\epsilon^2}\left(\frac{4}{\eta} + \frac{8}{\eta(1-2\eta L)}\right)\right \rceil,
    \]
    then $\mathbb{E}_T\left[\mathrm{dist}\left(0, -\nabla J(\hat\phi_T) + N_\Phi(\hat\phi_T)\right)^2\right] \leq \epsilon^2$, and the sample complexity is $\mathcal{O}\big(\epsilon^{-4}\big)$.
\end{theorem}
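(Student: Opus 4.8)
\section*{Proof proposal}

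The plan is to run a nonconvex projected-stochastic-gradient analysis, adapted to the ascent/maximization setting and to the policy-gradient estimator, organized around the gradient mapping
\[
G_t := \tfrac{1}{\eta}\big(\phi_{t+1} - \phi_t\big) = \tfrac{1}{\eta}\big(\Pi_\Phi(\phi_t + \eta g_t) - \phi_t\big).
\]
First I would record two properties of the minibatch estimator $g_t = \frac{1}{B}\sum_{i=1}^B r_i s_i$. By the policy-gradient identity it is conditionally unbiased, $\mathbb{E}[g_t\mid\phi_t] = \nabla J(\phi_t)$, and by i.i.d.\ sampling together with the stated variance hypothesis its conditional variance obeys $\mathbb{E}[\|g_t - \nabla J(\phi_t)\|^2\mid\phi_t]\le \sigma^2/B$. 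Writing $\delta_t := g_t - \nabla J(\phi_t)$, these are the only stochastic facts I will use.

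Next I would produce a one-step improvement inequality. Applying the $L$-smoothness bound \eqref{eq-lsmooth} to the pair $\phi_{t+1},\phi_t$ and invoking the first-order optimality of the projection, namely $\langle \phi_t + \eta g_t - \phi_{t+1},\,y - \phi_{t+1}\rangle \le 0$ for all $y\in\Phi$ (evaluated at $y=\phi_t$, which yields $\langle g_t, G_t\rangle \ge \|G_t\|^2$), I obtain a lower bound on $J(\phi_{t+1}) - J(\phi_t)$ of the form $\eta\,c(\eta L)\,\|G_t\|^2$ minus a cross term in $\delta_t$, where the step-size window $\eta\in(0,\tfrac{1}{2L})$ keeps the coefficient $c(\eta L)$ strictly positive. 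Taking conditional expectations, telescoping over $t=0,\dots,T-1$, and using $J(\phi_T)\le F^*$ together with $F(\phi_0)=J(\phi_0)$ (since $\phi_0\in\Phi$) collapses the left side into $\Delta = F^* - F(\phi_0)$, giving, after division by $T$, a bound of the form $\frac{1}{T}\sum_{t=0}^{T-1}\mathbb{E}[\|G_t\|^2] \le \frac{c_1\Delta}{\eta T} + \frac{c_2\sigma^2}{B}$ for constants $c_1,c_2$ depending only on $\eta L$.

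I would then convert the gradient mapping into the stated stationarity residual. From the projection inclusion $g_t - G_t \in N_\Phi(\phi_{t+1})$, the vector $-\nabla J(\phi_{t+1}) + (g_t - G_t)$ lies in $-\nabla J(\phi_{t+1}) + N_\Phi(\phi_{t+1})$, so by the triangle inequality and $L$-smoothness $\mathrm{dist}\big(0,\,-\nabla J(\phi_{t+1}) + N_\Phi(\phi_{t+1})\big) \le \|\delta_t\| + L\|\phi_{t+1}-\phi_t\| + \|G_t\| = \|\delta_t\| + (1+\eta L)\|G_t\|$. Squaring, taking expectations, and averaging over the uniformly random output index turns the $\|G_t\|^2$ bound into the claimed inequality \eqref{eq-convergence}, with the $\sigma^2/B$ term absorbing the extra $\mathbb{E}\|\delta_t\|^2$ contribution. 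Finally, choosing $B = \Theta(\sigma^2\epsilon^{-2})$ and $T = \Theta(\Delta\epsilon^{-2})$ drives each term below $\epsilon^2/2$; since the total sample budget is $BT = \Theta(\epsilon^{-4})$, the $\mathcal{O}(\epsilon^{-4})$ complexity follows.

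The hard part will be the cross term $\eta\langle\delta_t, G_t\rangle$: because $G_t$ is obtained by projecting a point that itself contains $\delta_t$, the noise and the effective step are statistically coupled, so the term does not vanish in expectation and cannot be discarded. I would resolve this by comparing $G_t$ with the deterministic gradient mapping $\bar G_t := \tfrac{1}{\eta}\big(\Pi_\Phi(\phi_t + \eta\nabla J(\phi_t)) - \phi_t\big)$, which is measurable with respect to $\phi_t$; nonexpansiveness of the projection gives $\|G_t - \bar G_t\|\le \|\delta_t\|$, and since $\mathbb{E}[\langle\delta_t,\bar G_t\rangle\mid\phi_t]=0$ the residual cross term is controlled by $\mathbb{E}[\|\delta_t\|^2\mid\phi_t]\le\sigma^2/B$. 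Keeping the resulting constants consistent with the step-size window $\eta\in(0,\tfrac1{2L})$ is the remaining bookkeeping, and it is what produces the factors appearing in \eqref{eq-convergence}.
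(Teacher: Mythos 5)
Your proposal is correct, and it reaches the theorem's conclusion by a genuinely different treatment of the two technical steps where the paper does its real work. The shared skeleton is the same: smoothness inequality \eqref{eq-lsmooth}, first-order optimality of the projection, the normal-cone inclusion $g_t - \tfrac{1}{\eta}(\phi_{t+1}-\phi_t)\in N_\Phi(\phi_{t+1})$ to pass from the gradient mapping to the stationarity residual, telescoping, the minibatch variance bound $\sigma^2/B$, and the uniform output rule. The difference is in how the noise enters. The paper kills the cross term $(g_t-\nabla J(\phi_t))^T(\phi_{t+1}-\phi_t)$ \emph{pathwise} with Young's inequality, $\tfrac{1}{2L}\|g_t-\nabla J(\phi_t)\|^2 + \tfrac{L}{2}\|\phi_{t+1}-\phi_t\|^2$, so the whole argument stays deterministic until a single expectation is taken at the end; unbiasedness of $g_t$ is used only implicitly, inside the $\sigma^2/B$ minibatch step. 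You instead decouple stochastically: you compare $G_t$ with the deterministic gradient mapping $\bar G_t = \tfrac1\eta(\Pi_\Phi(\phi_t+\eta\nabla J(\phi_t))-\phi_t)$, use nonexpansiveness to get $\|G_t-\bar G_t\|\le\|\delta_t\|$, and exploit conditional unbiasedness $\mathbb{E}[\langle\delta_t,\bar G_t\rangle\mid\phi_t]=0$ — the Ghadimi--Lan style argument. Likewise, for the residual you use a simple triangle inequality, $\mathrm{dist}\le\|\delta_t\|+(1+\eta L)\|G_t\|$, where the paper expands $\|\nabla J(\phi_{t+1})-g_t+\tfrac1\eta(\phi_{t+1}-\phi_t)\|^2$ algebraically via $2a^Tb=\|a+b\|^2-\|a\|^2-\|b\|^2$ and a longer chain of estimates. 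What each buys: your route needs the unbiasedness hypothesis explicitly but yields constants that are absolute for $\eta L<\tfrac12$ (roughly $8\sigma^2/B + 6\Delta/(\eta T)$), whereas the paper's noise coefficient $2+\tfrac{2}{\eta L(1-2\eta L)}$ degenerates as $\eta L\to 0$ or $\eta L\to\tfrac12$; so your bound is, if anything, tighter. The one caveat is your closing claim that the bookkeeping "produces the factors appearing in \eqref{eq-convergence}": it will not reproduce those exact constants, only the same $\mathcal{O}(\sigma^2/B+\Delta/(\eta T))$ form — but since the theorem's substance is that form plus the resulting $\mathcal{O}(\epsilon^{-4})$ sample complexity, this is a difference of constants, not a gap.
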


\begin{proof}
    From the $L$-smoothness of $J(\cdot)$ and \eqref{eq-lsmooth}, we see that 
    \begin{equation}
        \label{eq-pf-conv-1}
        J(\phi_{t+1})\geq J(\phi_t) + \nabla J(\phi_t)^T(\phi_{t+1} - \phi_t) - \frac{L}{2}\|\phi_{t+1} - \phi_t\|^2,\quad \forall t\geq 0.
    \end{equation}
    
    Since $\phi_{t+1} = \Pi_\Phi(\phi_t+\eta g_t)=\arg\min_{\phi\in\Phi}\,\tfrac{1}{2}\|\phi-(\phi_t+\eta g_t)\|^2$, by the first-order optimality condition of this projection problem, it is easy to verify that 
    \begin{equation}
        \label{eq-pf-conv-2}
        - g_t^T(\phi_{t+1}-\phi_t) + \frac{1}{2\eta}\|\phi_{t+1} - \phi_t\|^2 \leq 0,
    \end{equation}
    and that 
    \begin{equation}
        \label{eq-pf-conv-3}
        g^t - \frac{1}{\eta}(\phi_{t+1} - \phi_t) \in N_\Phi (\phi_{t+1}).
    \end{equation}
    From \eqref{eq-pf-conv-1} and \eqref{eq-pf-conv-2}, we have
    \[
        J(\phi_{t+1}) + g_t^T(\phi_{t+1} - \phi_t) - \frac{1}{2\eta}\|\phi_{t+1} - \phi_t\|^2 \geq J(\phi_t) + \nabla J(\phi_t)^T(\phi_{t+1} -\phi_t) - \frac{L}{2}\|\phi_{t+1} - \phi_t\|^2.
    \]
    Rearranging terms and using the fact that both $\phi_{t+1},\phi_t\in \Phi$ (hence, $\delta_{\Phi}(\phi_{t+1}) = \delta_{\Phi}(\phi_{t}) = 0$), we can rewrite the above inequality as 
    \begin{equation}
        \label{eq-pf-conv-4}
        \frac{1-\eta L}{2\eta}\|\phi_{t+1} - \phi_t\|^2 \leq F(\phi_{t+1}) - F(\phi_t) + (g_t - \nabla J(\phi_t))^T(\phi_{t+1} - \phi_t).
    \end{equation}

    On the one hand, by the Cauchy-Schwarz inequality, we see that 
    \[
        (g_t - \nabla J(\phi_t))^T(\phi_{t+1} - \phi_t) \leq \frac{1}{2L}\|g^t - \nabla J(\phi_t)\|^2 + \frac{L}{2}\|\phi_{t+1} - \phi_t\|^2,
    \]
    which together with \eqref{eq-pf-conv-4} implies that 
    \[
        \frac{1-2\eta L}{2\eta}\|\phi_{t+1} - \phi_t\|^2 \leq F(\phi_{t+1}) - F(\phi_t) + \frac{1}{2L}\|g^t - \nabla J(\phi_t)\|^2.
    \]
    Summing the above inequality over $t = 0,\dots, T-1$, we get 
    \begin{equation}\label{eq-pf-conv-6}
        \begin{aligned}
            &\; \frac{1-2\eta L}{2\eta}\sum_{t=0}^{T-1}\|\phi_{t+1} - \phi_t\|^2 \\
            \leq &\;  F(\phi_{T}) - F(\phi_0) + \frac{1}{2L}\sum_{t=0}^{T-1}\|g^t - \nabla J(\phi_t)\|^2 \\
            \leq &\; \Delta +  \frac{1}{2L}\sum_{t=0}^{T-1}\|g^t - \nabla J(\phi_t)\|^2.
        \end{aligned}
    \end{equation}

    On the other hand, adding the term $(\nabla J(\phi_{t+1}) - g_t)^T(\phi_{t+1}-\phi_t)$ and then multiplying $\tfrac{2}{\eta}$ on both sides of \eqref{eq-pf-conv-4}, we obtain that 
    \begin{equation}
        \label{eq-pf-conv-5}
        \begin{aligned}
            &\; 2(\nabla J(\phi_{t+1}) - g_t)^T\left(\frac{1}{\eta}(\phi_{t+1}-\phi_t)\right) + \frac{1-\eta L}{\eta^2}\|\phi_{t+1} - \phi_t\|^2 \\
            \leq &\; \frac{2}{\eta}(F(\phi_{t+1}) - F(\phi_t)) + \frac{2}{\eta}(\nabla J(\phi_{t+1}) - \nabla J(\phi_t))^T(\phi_{t+1} - \phi_t).
        \end{aligned}
    \end{equation}
    Notice that 
    \[
        \begin{aligned}
            &\; 2(\nabla J(\phi_{t+1}) - g_t)^T\left(\frac{1}{\eta}(\phi_{t+1}-\phi_t)\right) \\
            = &\; \left\lVert \nabla J(\phi_{t+1}) - g_t + \frac{1}{\eta}(\phi_{t+1}-\phi_t)\right\rVert^2 - \|\nabla J(\phi_{t+1}) - g_t\|^2 - \frac{1}{\eta^2}\|\phi_{t+1}-\phi_t\|^2,
        \end{aligned}
    \]
    we see that \eqref{eq-pf-conv-5} can be written as 
    \[
        \begin{aligned}
            &\; \left\lVert \nabla J(\phi_{t+1}) - g_t + \frac{1}{\eta}(\phi_{t+1}-\phi_t)\right\rVert^2 \\ 
            \leq &\; \|\nabla J(\phi_{t+1}) - g_t\|^2 + \left(\frac{1}{\eta^2}-\frac{1-\eta L}{\eta^2}\right)\|\phi_{t+1}-\phi_t\|^2 + \frac{2}{\eta}(F(\phi_{t+1}) - F(\phi_t)) \\
            &\; + \frac{2}{\eta}(\nabla J(\phi_{t+1}) - \nabla J(\phi_t))^T(\phi_{t+1} - \phi_t) \\
            \leq &\; 2\|\nabla J(\phi_{t}) - g_t\|^2 + 2 \|\nabla J(\phi_{t+1}) - \nabla J(\phi_t)\|^2 \quad \text{[$\|a-b\|^2\leq 2\|a-c\|^2 + 2\|b-c\|^2$]}\\
            &\; + \frac{L}{\eta}\|\phi_{t+1}-\phi_t\|^2 + \frac{2}{\eta}(F(\phi_{t+1}) - F(\phi_t)) \\
            &\; + \frac{2}{\eta}\|\nabla J(\phi_{t+1}) - \nabla J(\phi_t)\|\|\phi_{t+1} - \phi_t\|\quad \text{[Cauchy-Schwarz inequality]}\\
            \leq &\; 2\|\nabla J(\phi_{t}) - g_t\|^2 + \left(2L^2 + \frac{3L}{\eta}\right) \|\phi_{t+1}-\phi_t\|^2\quad \text{[$L$-smoothness]} \\
            &\;+ \frac{2}{\eta}(F(\phi_{t+1}) - F(\phi_t)).
        \end{aligned}
    \]
    Summing this inequality over $t = 0, \dots, T-1$, we get 
    \[
        \begin{aligned}
            &\; \sum_{t=0}^{T-1}\left\lVert \nabla J(\phi_{t+1}) - g_t + \frac{1}{\eta}(\phi_{t+1}-\phi_t)\right\rVert^2 \\
            \leq &\; 2 \sum_{t=0}^{T-1}\|\nabla J(\phi_{t}) - g_t\|^2 + \left(2L^2 + \frac{3L}{\eta}\right)\sum_{t=0}^{T-1} \|\phi_{t+1}-\phi_t\|^2 + \frac{2}{\eta}(F(\phi_T) - F(\phi_0)) \\ 
            \leq &\; 2 \sum_{t=0}^{T-1}\|\nabla J(\phi_{t}) - g_t\|^2 + \frac{2}{\eta^2}\sum_{t=0}^{T-1} \|\phi_{t+1}-\phi_t\|^2 + \frac{2\Delta }{\eta} \quad \text{[$L<\frac{1}{2\eta}$]} \\
            \leq &\; 2 \sum_{t=0}^{T-1}\|\nabla J(\phi_{t}) - g_t\|^2 + \frac{4}{\eta(1-2\eta L)}\left(\Delta +  \frac{1}{2L}\sum_{t=0}^{T-1}\|g^t - \nabla J(\phi_t)\|^2\right) + \frac{2\Delta }{\eta} \; \text{[\eqref{eq-pf-conv-6}]} \\
            = &\; 2 \sum_{t=0}^{T-1}\|\nabla J(\phi_{t}) - g_t\|^2 + \frac{2}{\eta L (1-2\eta L)} \sum_{t=0}^{T-1}\|g^t - \nabla J(\phi_t)\|^2 + \Delta \left(\frac{2}{\eta}+\frac{4}{\eta(1-2\eta L)}\right) \\
            =&\; 2\left(1 + \frac{2}{\eta L(1-2\eta L)}\right)\sum_{t=0}^{T-1}\|\nabla J(\phi_{t}) - g_t\|^2 + \Delta \left(\frac{2}{\eta}+\frac{4}{\eta(1-2\eta L)}\right),
        \end{aligned}
    \]
    which further implies that 
    \[
        \begin{aligned}
            &\; \mathbb{E}_T\left[\mathrm{dist}\left(0, -\nabla J(\hat\phi_T) + N_\Phi(\hat\phi_T)\right)^2\right] \\
            = &\; \frac{1}{T}\sum_{t=0}^{T-1} \mathbb{E}_T\left[\mathrm{dist}\left(0, -\nabla J(\phi_{t+1}) + N_\Phi(\phi_{t+1})\right)^2\right] \qquad \text{[output rule]}\\
            \leq &\; \frac{1}{T}\sum_{t=0}^{T-1} \mathbb{E}_T\left[\left\lVert\nabla J(\phi_{t+1}) - g_t + \frac{1}{\eta}(\phi_{t+1} - \phi_t)\right\rVert^2\right] \qquad \text{[projection optimality]} \\
            \leq &\; \frac{1}{T}\left(2 + \frac{2}{\eta L(1-2\eta L)}\right)\sum_{t=0}^{T-1}\mathbb{E}_T\left[\|\nabla J(\phi_{t}) - g_t\|^2\right] + \frac{\Delta}{T} \left(\frac{2}{\eta}+\frac{4}{\eta(1-2\eta L)}\right) \\
            \leq &\; \frac{1}{T}\left(2 + \frac{2}{\eta L(1-2\eta L)}\right)\sum_{t=0}^{T-1}\frac{\sigma^2}{B} + \frac{\Delta}{T} \left(\frac{2}{\eta}+\frac{4}{\eta(1-2\eta L)}\right)\quad \text{[mini-batch sampling]} \\
            = &\; \left(2 + \frac{2}{\eta L(1-2\eta L)}\right)\frac{\sigma^2}{B} + \frac{\Delta}{T} \left(\frac{2}{\eta}+\frac{4}{\eta(1-2\eta L)}\right).
        \end{aligned}
    \]
    This proves \eqref{eq-convergence} and the remaining statements follow as a direct consequence of \eqref{eq-convergence}. Hence, the proof is completed.
\end{proof}

\section{Experimental Results} \label{sec:results} 

In this section, we conduct extensive experiments to evaluate the effectiveness of our proposed approach. Our evaluation consists of two main components: (1) assessing the predictive performance of fine-tuned large language models (LLMs) in identifying operator sets within PDE solutions and (2) demonstrating the practical impact of these predictions in guiding operator selection within the finite expression method (FEX).  

First, we analyze the ability of fine-tuned LLMs to predict the correct operator sets in PDE solutions. The objective is to assess whether the models can effectively learn and generalize the symbolic relationships between PDE solutions and problem data. We compare the predicted operator sets against ground-truth labels to measure the accuracy of the learned representations. Additionally, we examine the impact of model architecture by fine-tuning several commonly used open-source LLMs. Second, we apply the fine-tuned LLMs to enhance the efficiency and accuracy of the FEX method by leveraging their predicted operator sets. This experiment attempts to achieve two goals. On one hand, we demonstrate that by obtaining more precise operator sets of smaller sizes, the computational costs spent in FEX can be significantly reduced. On the other hand, we evaluate whether FEX achieves improved accuracy in learning PDE solutions when provided with the correct prior information about the relevant operators. By comparing FEX solutions obtained with and without LLM-guided operator selection, we quantify improvements in both computational efficiency and numerical accuracy. Through these experiments, we aim to establish the viability of fine-tuned LLMs as effective tools for discovering symbolic relations in PDE solutions and enhancing PDE-solving techniques.

\subsection{Experimental Setup} \label{subsec:experiment-setup} 

For data generation, we consider the Poisson equation and the Linear Conservation Law, each paired with three commonly used boundary conditions: Cauchy, Dirichlet, and Neumann conditions. Using a tree depth of 3, we randomly generate a dataset of 198,000 equations, with 99,000 examples for each PDE type. This dataset is then used to fine-tune 4 state-of-the-art large language models: BART \cite{lewis2019bart}, T5 \cite{raffel2020exploring}, LLaMA-3B and LLaMA-8B \cite{grattafiori2024llama}.  

For fine-tuning, we minimize the cross-entropy loss using the \texttt{AdamW} optimizer with a learning rate of $2 \times 10^{-4}$ and a weight decay coefficient of $0.01$. Models are trained for 12 epochs with a batch size of 32. Training is performed on 8 NVIDIA A6000 GPUs, each equipped with 48~GB of memory. To improve training efficiency and scalability, we employ mixed-precision training (BF16) and gradient checkpointing. We also apply Low-Rank Adaptation (LoRA) with a rank of 16, scaling factor of $ 32$, and dropout rate of 0.1. Model evaluation and checkpointing are conducted at the end of each epoch.

Finally, we integrate the model into the FEX solver.\footnote{For a detailed description of the FEX method, see \cite{liang2022finite}; for the original implementation, visit \url{https://github.com/LeungSamWai/Finite-expression-method}.} During the execution of FEX on a given PDE, the fine-tuned model is called for inference. The resulting expression is post-processed to extract the sets of unique unary and binary operators, which are then used to define the operator pool available to FEX during its symbolic search for an analytical solution. We refer to this enhanced version as \emph{LLM-informed FEX}, in contrast to the original \emph{uninformed FEX}. The uninformed FEX employs a fixed operator set:  
\[
\begin{aligned}
    &\text{Binary set: } \mathbb{B} = \{+, -, *\}, \\
    &\text{Unary set: } \mathbb{U} = \{0, 1, \text{Id}, (\cdot)^2, (\cdot)^3, (\cdot)^4, \exp, \sin, \cos\}.
\end{aligned}
\]
In contrast, the LLM-informed FEX dynamically predicts the operator sets for each example, enabling a more adaptive and potentially efficient search process.  

\subsection{Effectiveness of Fine-tuned LLMs} \label{subsec:llm-effectiveness}
In this subsection, we evaluate the performance of the fine-tuned BART, T5 and Llama3 models in predicting operator sets. Specifically, we track the average number of mismatched operators on the test dataset for each epoch. The computational results are summarized in Figure~\ref{fig:mse}.  
\begin{figure}[htb!]
    \centering
    \includegraphics[width=0.75\linewidth]{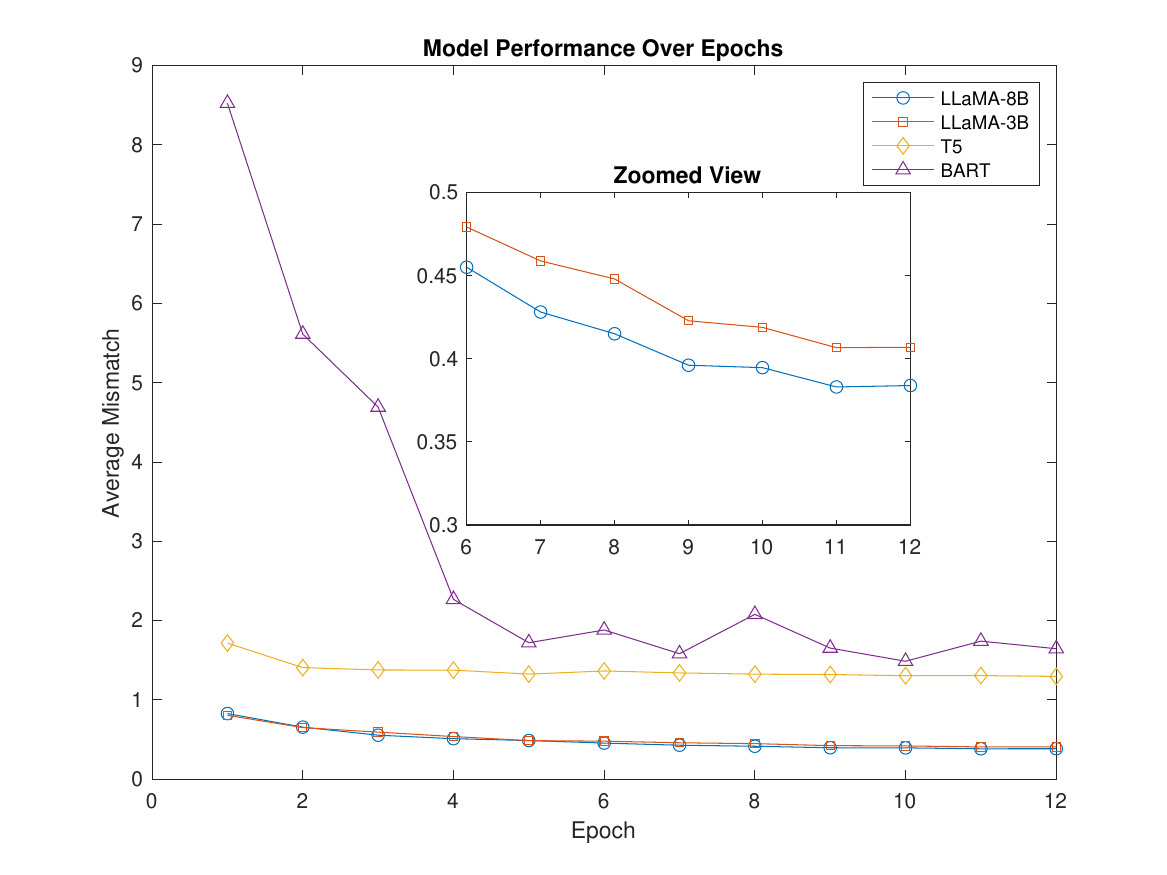}
    \caption{Comparison between T5, BART, Llama3-3B and Llama3-8B in terms of average mismatch on test dataset.}\label{fig:mse}
\end{figure}
Interested readers are referred to the Table \ref{tab:comparison_mismatch} in Appendix \ref{appendix-mismatch} for the detailed computational results.

Based on the computational results, LLaMA-8B consistently achieves the lowest number of average mismatch, starting at 0.83 and decreasing to 0.38 within 12 epochs, demonstrating superior performance and training stability. The smaller LLaMA-3B variant also performs competitively, with slightly higher values than LLaMA-8B at each epoch, further confirming the effectiveness of the LLaMA architecture. These results indicate that, with fine-tuning, LLaMA models can accurately predict the operators in PDE solutions on average. In contrast, T5 starts with a significantly higher initial loss of 1.72, converging more slowly to 1.30, suggesting less efficient learning. BART begins with the highest initial loss (8.53), which decreases rapidly but still remains substantially higher than that of the LLaMA models by the final epoch (1.65). Overall, the decoder-only LLaMA models, particularly LLaMA-8B, exhibit more effective and stable learning dynamics compared to the encoder-decoder models T5 and BART, with larger model sizes empirically yielding better performance.

\subsection{Effectiveness of LLM-informed FEX} \label{sec:LLM-informed-FEX}

In this section, we compare the performance of LLM-informed FEX with the fine-tuned LLaMA-8B model and the (original) uninformed FEX. To this end, we evaluated the performance of the FEX algorithm on a set of 100 randomly generated PDE examples, including 50 Poisson equations and 50 Linear Conservation Law equations. Since FEX is a randomized reinforcement learning approach, each instance was solved five times to obtain average computational results, including the number of iterations until convergence, computation time, and solution error. The detailed computational results can be found in Appendix \ref{appendix-fex}, and Table \ref{tab:randpde} and Figure~\ref{fig:accuracy} summarize the statistics of these results. 

\begin{table}[htb!]
\centering
\small
\setlength{\tabcolsep}{1.65pt} 
\begin{tabular}{llccc}
\toprule
\textbf{PDE} & \textbf{Metric} & \textbf{Uninformed} & \textbf{LLM-Informed} & \textbf{Speedup} \\
\midrule
\multirow{2}{*}{Conserv} 
 & Avg. Iter     & 28.52 &  5.23 & $5.45\times$ \\
 & Avg. Time (m) & 23.67 &  3.95 & $6.00\times$ \\
\midrule
\multirow{2}{*}{Poisson} 
 & Avg. Iter     & 28.73 &  6.47 & $4.44\times$ \\
 & Avg. Time (m) & 62.07 & 14.09 & $4.41\times$ \\
\bottomrule
\end{tabular}
\caption{Comparison of computational efficiency between uninformed and LLM-informed FEX.}
\label{tab:randpde}
\end{table}

\begin{figure}[htb!]
  \centering
  \begin{subfigure}[b]{0.48\linewidth}
    \centering
    \includegraphics[width=0.98\linewidth]{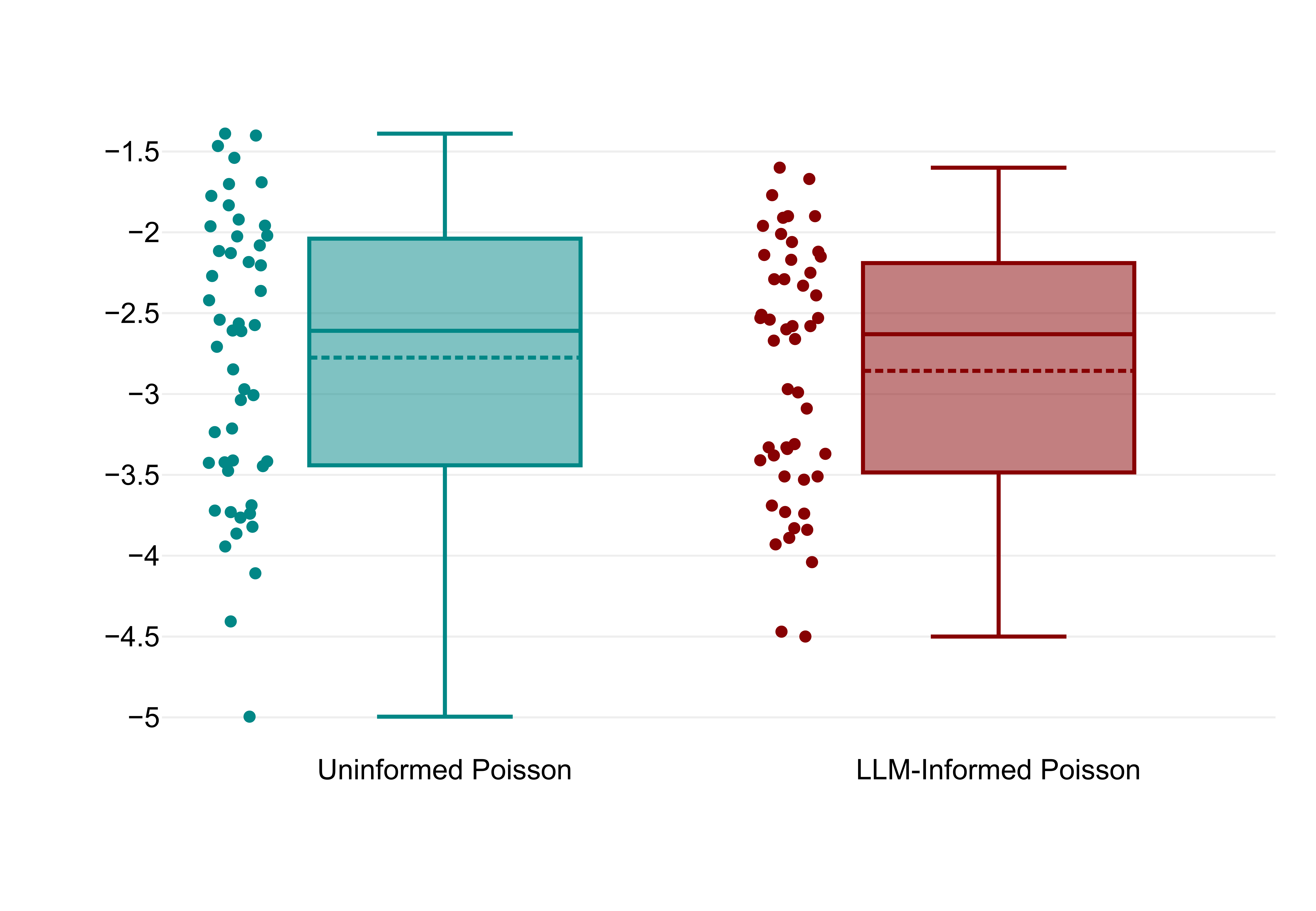}
    \label{fig:poisson-box}
  \end{subfigure}\hfill
  \begin{subfigure}[b]{0.48\linewidth}
    \centering
    \includegraphics[width=0.98\linewidth]{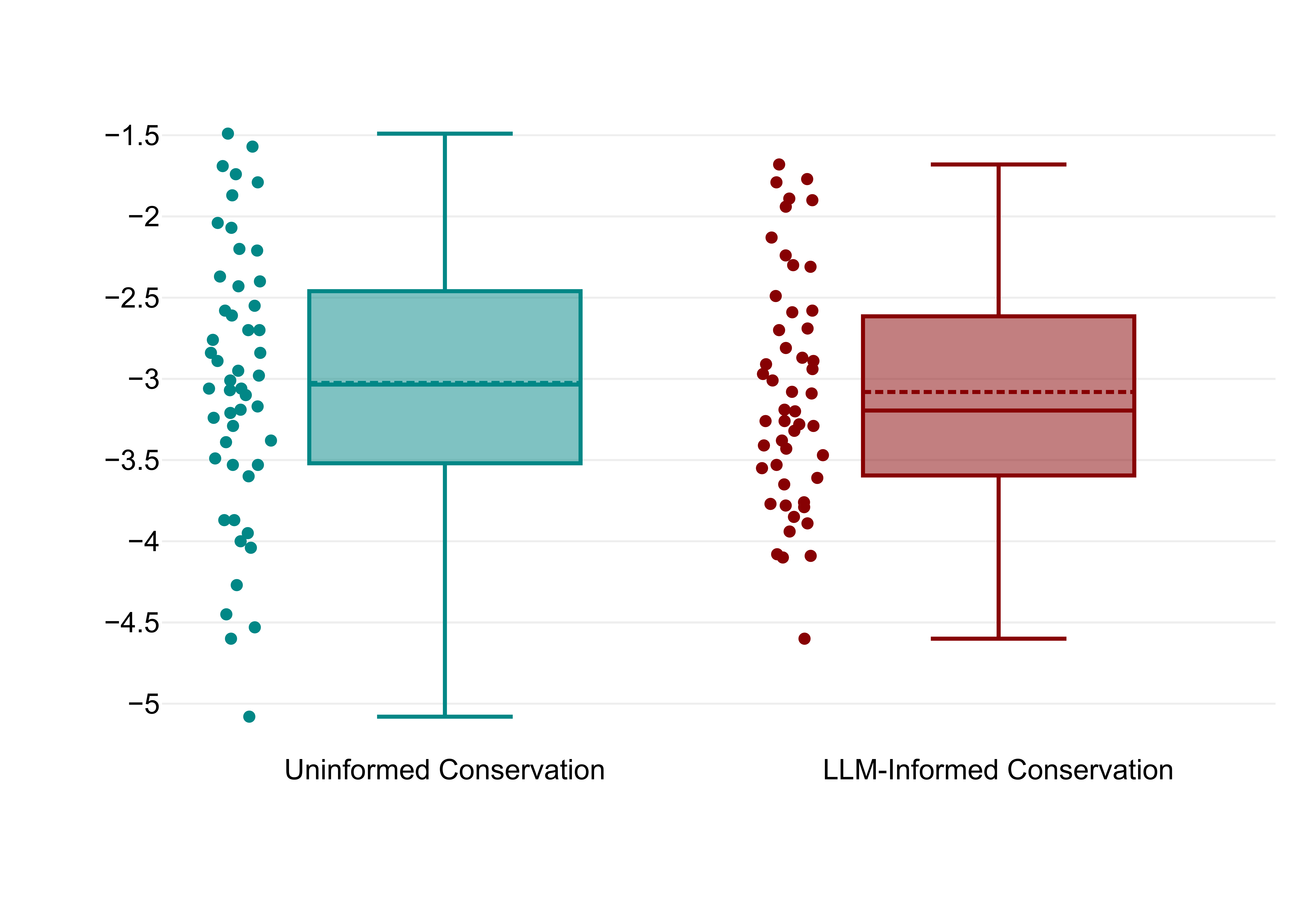}
    \label{fig:linconservation-box}
  \end{subfigure}
  \caption{Comparison on solution accuracy. The accuracy is in log-domain.}   
  \label{fig:accuracy}
\end{figure}

The computational results show that the LLM-informed FEX significantly improves efficiency over the uninformed FEX. A key advantage is the reduction in the number of operators used. The uninformed FEX always relies on a fixed set of 3 binary and 9 unary operators. In contrast, the LLM-informed FEX dynamically and effectively selects smaller and more relevant operator sets. This smalle and accurate search space leads to a substantial drop in the number of iterations needed for convergence. This efficiency gain further results in significant computational time savings. Despite this excellent speedup, the accuracy remains competitive: when the uninformed FEX converges, both methods produce similar approximation errors. And the LLM-informed FEX demonstrates better performance in terms of stablility as shown in Figure \ref{fig:accuracy}. These findings demonstrate that integrating the fine-tuned LLM into symbolic machine learning based PDE solvers enables more efficiency and reliability via informed operator selection. Future research directions include developing a unified theoretical framework for a broader class of PDEs, conducting extensive numerical experiments on complex real-world problems to evaluate practical effectiveness, and benchmarking existing symbolic regression methods or designing novel frameworks tailored to PDE solution discovery.

\section{Conclusions} \label{sec:conclusions}

In this work, we explored the potential of large language models (LLMs) to uncover symbolic relationships in partial differential equations (PDEs), a largely unexplored challenge in the intersection of AI and scientific computing. Based on our theoretical insights on Poisson equation, our results demonstrate that fine-tuned LLMs can effectively predict the operators involved in PDE solutions by leveraging symbolic information from the governing equations. By integrating these predictions into symbolic machine learning approaches, such as the finite expression method (FEX), we significantly enhanced both the efficiency and accuracy of analytical PDE approximations. Compared to the traditional uninformed FEX, the LLM-informed approach reduces the number of operators required, accelerates convergence, and maintains high solution accuracy, providing a fully interpretable and computationally efficient pipeline. These findings highlight the promising role of LLMs in advancing symbolic reasoning for scientific problems, paving the way for further exploration of AI-driven methodologies in mathematical modeling and equation solving.

\section*{Acknowledgments}
The authors were partially supported by the US National Science Foundation under awards DMS-2244988, IIS-25209787, GEO-5239902, and the DARPA D24AP00325-00.

\bibliographystyle{siamplain}
\bibliography{references}

\appendix

\section{Detailed Computational Results for Average Mismatch} \label{appendix-mismatch}
See Table~\ref{tab:comparison_mismatch}.

\begin{table*}[htb!]
    \centering
    \begin{tabular}{ccccc}
        \toprule
        \textbf{Epoch} & \textbf{LLaMA-8B} & \textbf{LLaMA-3B} & \textbf{T5} & \textbf{BART} \\
        \midrule
        1  & 0.82697 & 0.80510 & 1.71635 & 8.52542 \\
        2  & 0.65606 & 0.65040 & 1.40708 & 5.61399 \\
        3  & 0.55470 & 0.59328 & 1.37730 & 4.69332 \\
        4  & 0.51061 & 0.53747 & 1.37543 & 2.26791 \\
        5  & 0.48742 & 0.48601 & 1.32494 & 1.72084 \\
        6  & 0.45515 & 0.47934 & 1.36476 & 1.88165 \\
        7  & 0.42818 & 0.45894 & 1.34016 & 1.58241 \\
        8  & 0.41510 & 0.44808 & 1.32464 & 2.07886 \\
        9  & 0.39616 & 0.42293 & 1.31991 & 1.65313 \\
        10 & 0.39470 & 0.41894 & 1.30461 & 1.48738 \\
        11 & 0.38308 & 0.40672 & 1.30633 & 1.74097 \\
        12 & 0.38394 & 0.40697 & 1.29627 & 1.64541 \\
        \bottomrule
    \end{tabular}
    \caption{Comparison of Average Mismatch Across Models During Fine-tuning on 198K Dataset}
    \label{tab:comparison_mismatch}
\end{table*}

\section{Detailed computational results for FEX} \label{appendix-fex}
\begin{table*}[htb!]
\centering
\tiny
\makebox[0.85\textwidth][c]{%
\begin{tabular}{lrrrrrr}
\toprule
True solution & Iter (Orig) & Iter (LLM) & Time (Orig) & Time (LLM) & Err (Orig) & Err (LLM) \\
\midrule
$2\bigl(-2x_2^3+2\sin(x_3)\bigr)^3$            & 18.2  & 3.2  & 42.932   & 4.500   & $1.20\times10^{-2}$ & $5.09\times10^{-3}$ \\
$2-2\exp\bigl(4x_2\cos(x_2)\bigr)$             & 77.2  & 15.4 & 160.186  & 41.838  & $2.89\times10^{-2}$ & $2.89\times10^{-3}$ \\
$8x_2^4-2\bigl(2\cos(x_2)+2\bigr)^2$           & 100.0 & 20.0 & 231.106  & 52.544  & $9.44\times10^{-3}$ & $2.13\times10^{-3}$ \\
$-4x_2^4+4x_2^3+2$                            & 15.8  & 3.2  & 31.420   & 5.098   & $5.81\times10^{-4}$ & $1.03\times10^{-3}$ \\
$2-2\bigl(-2\exp(x_1)+2\cos(x_1)\bigr)^2$      & 38.0  & 7.8  & 82.448   & 15.878  & $2.88\times10^{-3}$ & $2.21\times10^{-3}$ \\
$2\sin\bigl(2x_3^3-2\exp(x_3)\bigr)$           & 35.6  & 8.0  & 83.972   & 19.326  & $1.51\times10^{-4}$ & $4.52\times10^{-4}$ \\
$-4x_2^4+4x_3^4+8\cos(x_3)$                   & 25.6  & 5.8  & 62.552   & 13.770  & $1.42\times10^{-3}$ & $2.92\times10^{-4}$ \\
$2\sin\bigl(2x_3^3+2\cos(x_3)\bigr)$           & 13.6  & 3.4  & 21.496   & 5.974   & $3.35\times10^{-4}$ & $1.46\times10^{-4}$ \\
$2\exp\bigl(2x_2^2-2\sin(x_2)\bigr)$           & 24.2  & 6.2  & 62.858   & 12.180  & $2.04\times10^{-2}$ & $1.71\times10^{-2}$ \\
$64\sin^2(x_2)\cos^2(x_1)$                    & 32.8  & 9.8  & 78.910   & 22.980  & $1.09\times10^{-2}$ & $2.66\times10^{-3}$ \\
$4\exp\bigl(2\sin(x_2)+2\bigr)$                & 5.2   & 1.6  & 8.154    & 1.280   & $5.37\times10^{-3}$ & $7.00\times10^{-3}$ \\
$512\sin^4(x_1)\cos^4(x_1)$                   & 49.4  & 15.4 & 113.850  & 40.554  & $3.42\times10^{-2}$ & $2.53\times10^{-2}$ \\
$2\exp(2x_2)-2\exp\bigl(2x_1^2-2x_2^2\bigr)$   & 70.8  & 22.4 & 148.142  & 49.740  & $4.08\times10^{-2}$ & $7.19\times10^{-3}$ \\
$2\exp\bigl(2x_3^4-2\exp(x_3)\bigr)$           & 17.0  & 5.4  & 32.546   & 10.676  & $2.73\times10^{-3}$ & $4.07\times10^{-3}$ \\
$4\bigl(2x_2^4+2x_2^3\bigr)^2$                 & 17.6  & 6.0  & 28.374   & 11.458  & $1.47\times10^{-2}$ & $1.10\times10^{-2}$ \\
$2\bigl(2x_2^4-2x_2^2\bigr)^3$                 & 51.6  & 17.6 & 106.732  & 42.008  & $2.45\times10^{-3}$ & $1.08\times10^{-3}$ \\
$-2\bigl(2x_1^2-2\bigr)^4$                    & 3.4   & 1.2  & 4.070    & 0.554   & $8.30\times10^{-3}$ & $1.25\times10^{-2}$ \\
$-2\sin\bigl(2\sin(x_2)-2\bigr)$               & 2.8   & 1.0  & 4.670    & 0.000   & $1.72\times10^{-4}$ & $1.49\times10^{-4}$ \\
$2\exp\bigl(2\cos(x_2)+2\bigr)$               & 5.6   & 2.0  & 8.260    & 2.170   & $2.47\times10^{-3}$ & $9.79\times10^{-3}$ \\
$4\cos\bigl(4x_2^4\bigr)\cos\bigl(2x_1^4-2\bigr)$ & 40.4 & 15.8 & 91.274   & 38.016  & $6.25\times10^{-3}$ & $5.63\times10^{-3}$ \\
$2\sin\bigl(4\exp(x_1)\bigr)$                  & 5.8   & 2.4  & 9.644    & 4.926   & $2.67\times10^{-3}$ & $1.26\times10^{-2}$ \\
$2\bigl(2x_3^4-2\bigr)^3+2\cos\bigl(2x_2-2\bigr)$ & 76.8 & 31.8 & 175.690  & 78.252  & $1.99\times10^{-2}$ & $5.17\times10^{-3}$ \\
$16\sin(x_3)$                                 & 2.4   & 1.0  & 2.362    & 0.000   & $1.82\times10^{-4}$ & $8.07\times10^{-4}$ \\
$2\exp\bigl(2x_1^4+2\sin(x_3)\bigr)+2$         & 24.6  & 10.6 & 57.754   & 22.852  & $7.66\times10^{-3}$ & $8.73\times10^{-3}$ \\
$2\sin\bigl(2x_1^4-2\sin(x_3)\bigr)-2$         & 55.2  & 24.0 & 126.722  & 62.458  & $1.90\times10^{-4}$ & $1.30\times10^{-4}$ \\
$2\sin\bigl(2x_3+2\bigr)-2$                    & 2.2   & 1.0  & 3.464    & 0.000   & $1.86\times10^{-4}$ & $4.64\times10^{-4}$ \\
$2\cos\bigl(2\sin(x_2)+2\cos(x_3)\bigr)$       & 8.0   & 3.8  & 15.570   & 6.608   & $1.37\times10^{-4}$ & $9.22\times10^{-5}$ \\
$2\cos\bigl(2x_1^3-2x_3^2\bigr)$               & 20.6  & 10.0 & 55.294   & 23.446  & $9.84\times10^{-4}$ & $4.13\times10^{-4}$ \\
$2\exp\bigl(2x_1^2+2\sin(x_3)\bigr)+2$         & 17.2  & 8.4  & 39.844   & 19.168  & $9.55\times10^{-3}$ & $7.52\times10^{-3}$ \\
$32x_3^6+2$                                   & 2.0   & 1.0  & 2.084    & 0.000   & $1.10\times10^{-2}$ & $2.93\times10^{-3}$ \\
$2\bigl(-2x_2^3+2\sin(x_3)\bigr)^3$            & 18.2  & 3.2  & 42.932   & 4.500   & $1.20\times10^{-2}$ & $5.09\times10^{-3}$ \\
$2-2\exp\bigl(4x_2\cos(x_2)\bigr)$             & 77.2  & 15.4 & 160.186  & 41.838  & $2.89\times10^{-2}$ & $2.89\times10^{-3}$ \\
$8x_2^4-2\bigl(2\cos(x_2)+2\bigr)^2$           & 100.0 & 20.0 & 231.106  & 52.544  & $9.44\times10^{-3}$ & $2.13\times10^{-3}$ \\
$-4x_2^4+4x_2^3+2$                             & 15.8  & 3.2  & 31.420   & 5.098   & $5.81\times10^{-4}$ & $1.03\times10^{-3}$ \\
$2-2\bigl(-2\exp(x_1)+2\cos(x_1)\bigr)^2$      & 38.0  & 7.8  & 82.448   & 15.878  & $2.88\times10^{-3}$ & $2.21\times10^{-3}$ \\
$2\sin\bigl(2x_3^3-2\exp(x_3)\bigr)$           & 35.6  & 8.0  & 83.972   & 19.326  & $1.51\times10^{-4}$ & $4.52\times10^{-4}$ \\
$-4x_2^4+4x_3^4+8\cos(x_3)$                    & 25.6  & 5.8  & 62.552   & 13.770  & $1.42\times10^{-3}$ & $2.92\times10^{-4}$ \\
$2\sin\bigl(2x_3^3+2\cos(x_3)\bigr)$           & 13.6  & 3.4  & 21.496   & 5.974   & $3.35\times10^{-4}$ & $1.46\times10^{-4}$ \\
$2\exp\bigl(2x_2^2-2\sin(x_2)\bigr)$           & 24.2  & 6.2  & 62.858   & 12.180  & $2.04\times10^{-2}$ & $1.71\times10^{-2}$ \\
$64\sin^2(x_2)\cos^2(x_1)$                     & 32.8  & 9.8  & 78.910   & 22.980  & $1.09\times10^{-2}$ & $2.66\times10^{-3}$ \\
$4\exp\bigl(2\sin(x_2)+2\bigr)$                & 5.2   & 1.6  & 8.154    & 1.280   & $5.37\times10^{-3}$ & $7.00\times10^{-3}$ \\
$512\sin^4(x_1)\cos^4(x_1)$                    & 49.4  & 15.4 & 113.850  & 40.554  & $3.42\times10^{-2}$ & $2.53\times10^{-2}$ \\
$2\exp(2x_2)-2\exp\bigl(2x_1^2-2x_2^2\bigr)$   & 70.8  & 22.4 & 148.142  & 49.740  & $4.08\times10^{-2}$ & $7.19\times10^{-3}$ \\
$2\exp\bigl(2x_3^4-2\exp(x_3)\bigr)$           & 17.0  & 5.4  & 32.546   & 10.676  & $2.73\times10^{-3}$ & $4.07\times10^{-3}$ \\
$4\bigl(2x_2^4+2x_2^3\bigr)^2$                 & 17.6  & 6.0  & 28.374   & 11.458  & $1.47\times10^{-2}$ & $1.10\times10^{-2}$ \\
$2\bigl(2x_2^4-2x_2^2\bigr)^3$                 & 51.6  & 17.6 & 106.732  & 42.008  & $2.45\times10^{-3}$ & $1.08\times10^{-3}$ \\
$-2\bigl(2x_1^2-2\bigr)^4$                     & 3.4   & 1.2  & 4.070    & 0.554   & $8.30\times10^{-3}$ & $1.25\times10^{-2}$ \\
$-2\sin\bigl(2\sin(x_2)-2\bigr)$               & 2.8   & 1.0  & 4.670    & 0.000   & $1.72\times10^{-4}$ & $1.49\times10^{-4}$ \\
$2\exp\bigl(2\cos(x_2)+2\bigr)$                & 5.6   & 2.0  & 8.260    & 2.170   & $2.47\times10^{-3}$ & $9.79\times10^{-3}$ \\
$4\cos\bigl(4x_2^4\bigr)\cos\bigl(2x_1^4-2\bigr)$ & 40.4 & 15.8 & 91.274   & 38.016  & $6.25\times10^{-3}$ & $5.63\times10^{-3}$ \\
$2\sin\bigl(4\exp(x_1)\bigr)$                  & 5.8   & 2.4  & 9.644    & 4.926   & $2.67\times10^{-3}$ & $1.26\times10^{-2}$ \\
$2\bigl(2x_3^4-2\bigr)^3+2\cos\bigl(2x_2-2\bigr)$ & 76.8 & 31.8 & 175.690  & 78.252  & $1.99\times10^{-2}$ & $5.17\times10^{-3}$ \\
$16\sin(x_3)$                                  & 2.4   & 1.0  & 2.362    & 0.000   & $1.82\times10^{-4}$ & $8.07\times10^{-4}$ \\
$2\exp\bigl(2x_1^4+2\sin(x_3)\bigr)+2$         & 24.6  & 10.6 & 57.754   & 22.852  & $7.66\times10^{-3}$ & $8.73\times10^{-3}$ \\
\bottomrule
\end{tabular}}
\caption{Results of LLM-informed and original FEX runs on 50 randomly generated Linear Conservation Law functions.}\label{tab-con}
\end{table*}

\begin{table*}[htb!]
\centering
\tiny
\setlength{\tabcolsep}{4pt}
\makebox[0.85\textwidth][c]{%
\begin{tabular}{l r r r r r r}
\toprule
True solution & Iter (Orig) & Iter (LLM) & Time (Orig) & Time (LLM) & Err (Orig) & Err (LLM) \\
\midrule
$8\,x_{1}\,\mathrm{exp}(x_{2})$ & 42.4 & 1.2 & 74.44 & 0.5900 & $3.75\times10^{-4}$ & $4.91\times10^{-4}$ \\
$128\,x_{1}^{12} + 2$ & 40.2 & 1.2 & 70.704 & 0.516 & $3.97\times10^{-2}$ & $2.14\times10^{-2}$ \\
$-32\,x_{2}^{10}$ & 35.2 & 1.2 & 71.586 & 0.536 & $7.44\times10^{-3}$ & $6.71\times10^{-3}$ \\
$-8\,x_{1}^{4}\,x_{2}$ & 25.0 & 1.2 & 53.342 & 0.510 & $9.18\times10^{-4}$ & $3.86\times10^{-4}$ \\
$2\,\cos\bigl(4\,x_{2}^{6}\bigr)$ & 44.2 & 3.0 & 85.792 & 4.326 & $1.68\times10^{-2}$ & $1.23\times10^{-2}$ \\
$-8\,x_{2}\,\cos(x_{2}) + 2$ & 32.4 & 2.4 & 78.076 & 4.548 & $3.78\times10^{-4}$ & $3.06\times10^{-4}$ \\
$8\,x_{1}^{2}\,\cos(x_{2}) + 2$ & 42.8 & 3.2 & 85.55 & 6.154 & $2.05\times10^{-4}$ & $1.85\times10^{-4}$ \\
$-4\,\mathrm{exp}(x_{3}) + 4\,\sin(x_{1}) + 2$ & 20.2 & 1.6 & 43.712 & 2.150 & $1.14\times10^{-4}$ & $2.04\times10^{-4}$ \\
$2\bigl(2\,x_{2}^{2} - 2\,x_{3}^{4}\bigr)^{2} + 2$ & 31.8 & 3.0 & 69.054 & 3.400 & $6.55\times10^{-3}$ & $2.66\times10^{-3}$ \\
$-32\,\sin(x_{2})^{4}$ & 15.6 & 1.6 & 36.210 & 1.704 & $4.34\times10^{-3}$ & $4.66\times10^{-3}$ \\
$-4\,x_{2}^{3} - 4\,x_{3}^{4} + 16$ & 20.6 & 2.4 & 45.110 & 2.716 & $3.88\times10^{-4}$ & $4.68\times10^{-4}$ \\
$2\,\mathrm{exp}\bigl(-2\,\mathrm{exp}(x_{3}) + 2\,\cos(x_{3})\bigr) + 2$ & 22.0 & 2.8 & 52.848 & 4.994 & $3.83\times10^{-4}$ & $4.28\times10^{-4}$ \\
$8\,x_{1}^{2}\,x_{3} + 2$ & 11.4 & 1.6 & 26.022 & 1.742 & $3.58\times10^{-4}$ & $1.82\times10^{-4}$ \\
$2\,\sin\bigl(2\,\sin(x_{1}) - 2\,\cos(x_{3})\bigr)$ & 19.2 & 3.0 & 47.168 & 4.768 & $7.78\times10^{-5}$ & $3.13\times10^{-5}$ \\
$4\bigl(2\,\cos(x_{2}) - 2\,\cos(x_{3})\bigr)^{4}$ & 10.2 & 1.6 & 21.852 & 1.476 & $3.80\times10^{-3}$ & $3.12\times10^{-3}$ \\
$2\bigl(2\,x_{1} + 2\,\mathrm{exp}(x_{1})\bigr)^{2} - 2$ & 26.4 & 4.2 & 44.664 & 6.108 & $1.07\times10^{-3}$ & $2.94\times10^{-3}$ \\
$2\,\sin\bigl(2\,x_{2}^{3} + 2\,x_{3}^{4}\bigr) + 2$ & 29.0 & 4.8 & 73.818 & 12.376 & $6.12\times10^{-4}$ & $3.11\times10^{-4}$ \\
$2\bigl(-2\,x_{1} + 2\,\mathrm{exp}(x_{3})\bigr)^{2} - 2$ & 20.4 & 3.4 & 35.796 & 4.762 & $1.96\times10^{-3}$ & $2.52\times10^{-3}$ \\
$2\,\cos\bigl(2\,\mathrm{exp}(x_{1}) + 2\,\sin(x_{2})\bigr) + 2$ & 69.6 & 11.6 & 157.764 & 27.654 & $3.92\times10^{-5}$ & $1.17\times10^{-4}$ \\
$2\,\sin\bigl(2\,\sin(x_{2}) - 2\,\cos(x_{2})\bigr)$ & 18.2 & 3.2 & 37.792 & 5.254 & $1.01\times10^{-5}$ & $3.35\times10^{-5}$ \\
$2\bigl(-2\,x_{2}^{3} + 2\,\sin(x_{3})\bigr)^{3}$ & 18.2 & 3.2 & 42.932 & 4.500 & $1.20\times10^{-2}$ & $5.09\times10^{-3}$ \\
$2 - 2\,\mathrm{exp}\bigl(4\,x_{2}\,\cos(x_{2})\bigr)$ & 77.2 & 15.4 & 160.186 & 41.838 & $2.89\times10^{-2}$ & $2.89\times10^{-3}$ \\
$8\,x_{2}^{4} - 2\bigl(2\,\cos(x_{2}) + 2\bigr)^{2}$ & 100.0 & 20.0 & 231.106 & 52.544 & $9.44\times10^{-3}$ & $2.13\times10^{-3}$ \\
$-4\,x_{2}^{4} + 4\,x_{2}^{3} + 2$ & 15.8 & 3.2 & 31.420 & 5.098 & $5.81\times10^{-4}$ & $1.03\times10^{-3}$ \\
$2 - 2\bigl(-2\,\mathrm{exp}(x_{1}) + 2\,\cos(x_{1})\bigr)^{2}$ & 38.0 & 7.8 & 82.448 & 15.878 & $2.88\times10^{-3}$ & $2.21\times10^{-3}$ \\
$2\,\sin\bigl(2\,x_{3}^{3} - 2\,\mathrm{exp}(x_{3})\bigr)$ & 35.6 & 8.0 & 83.972 & 19.326 & $1.51\times10^{-4}$ & $4.52\times10^{-4}$ \\
$-4\,x_{2}^{4} + 4\,x_{3}^{4} + 8\,\cos(x_{3})$ & 25.6 & 5.8 & 62.552 & 13.770 & $1.42\times10^{-3}$ & $2.92\times10^{-4}$ \\
$2\,\sin\bigl(2\,x_{3}^{3} + 2\,\cos(x_{3})\bigr)$ & 13.6 & 3.4 & 21.496 & 5.974 & $3.35\times10^{-4}$ & $1.46\times10^{-4}$ \\
$2\,\mathrm{exp}\bigl(2\,x_{2}^{2} - 2\,\sin(x_{2})\bigr)$ & 24.2 & 6.2 & 62.858 & 12.180 & $2.04\times10^{-2}$ & $1.71\times10^{-2}$ \\
$64\,\sin(x_{2})^{2}\,\cos(x_{1})^{2}$ & 32.8 & 9.8 & 78.910 & 22.980 & $1.09\times10^{-2}$ & $2.66\times10^{-3}$ \\
$4\,\mathrm{exp}\bigl(2\,\sin(x_{2}) + 2\bigr)$ & 5.2 & 1.6 & 8.154 & 1.280 & $5.37\times10^{-3}$ & $7.00\times10^{-3}$ \\
$512\,\sin(x_{1})^{4}\,\cos(x_{1})^{4}$ & 49.4 & 15.4 & 113.850 & 40.554 & $3.42\times10^{-2}$ & $2.53\times10^{-2}$ \\
$2\,\mathrm{exp}(2\,x_{2}) - 2\,\mathrm{exp}(2\,x_{1}^{2} - 2\,x_{2}^{2})$ & 70.8 & 22.4 & 148.142 & 49.740 & $4.08\times10^{-2}$ & $7.19\times10^{-3}$ \\
$2\,\mathrm{exp}\bigl(2\,x_{3}^{4} - 2\,\mathrm{exp}(x_{3})\bigr)$ & 17.0 & 5.4 & 32.546 & 10.676 & $2.73\times10^{-3}$ & $4.07\times10^{-3}$ \\
$4\bigl(2\,x_{2}^{4} + 2\,x_{2}^{3}\bigr)^{2}$ & 17.6 & 6.0 & 28.374 & 11.458 & $1.47\times10^{-2}$ & $1.10\times10^{-2}$ \\
$2\bigl(2\,x_{2}^{4} - 2\,x_{2}^{2}\bigr)^{3}$ & 51.6 & 17.6 & 106.732 & 42.008 & $2.45\times10^{-3}$ & $1.08\times10^{-3}$ \\
$-2\bigl(2\,x_{1}^{2} - 2\bigr)^{4}$ & 3.4 & 1.2 & 4.070 & 0.554 & $8.30\times10^{-3}$ & $1.25\times10^{-2}$ \\
$-2\,\sin\bigl(2\,\sin(x_{2}) - 2\bigr)$ & 2.8 & 1.0 & 4.670 & 0.000 & $1.72\times10^{-4}$ & $1.49\times10^{-4}$ \\
$2\,\mathrm{exp}\bigl(2\,\cos(x_{2}) + 2\bigr)$ & 5.6 & 2.0 & 8.260 & 2.170 & $2.47\times10^{-3}$ & $9.79\times10^{-3}$ \\
$4\,\cos\bigl(4\,x_{2}^{4}\bigr)\,\cos\bigl(2\,x_{1}^{4} - 2\bigr)$ & 40.4 & 15.8 & 91.274 & 38.016 & $6.25\times10^{-3}$ & $5.63\times10^{-3}$ \\
$2\,\sin\bigl(4\,\mathrm{exp}(x_{1})\bigr)$ & 5.8 & 2.4 & 9.644 & 4.926 & $2.67\times10^{-3}$ & $1.26\times10^{-2}$ \\
$2\bigl(2\,x_{3}^{4} - 2\bigr)^{3} + 2\,\cos\bigl(2\,x_{2} - 2\bigr)$ & 76.8 & 31.8 & 175.690 & 78.252 & $1.99\times10^{-2}$ & $5.17\times10^{-3}$ \\
$16\,\sin(x_{3})$ & 2.4 & 1.0 & 2.362 & 0.000 & $1.82\times10^{-4}$ & $8.07\times10^{-4}$ \\
$2\,\mathrm{exp}\bigl(2\,x_{1}^{4} + 2\,\sin(x_{3})\bigr) + 2$ & 24.6 & 10.6 & 57.754 & 22.852 & $7.66\times10^{-3}$ & $8.73\times10^{-3}$ \\
$2\,\sin\bigl(2\,x_{1}^{4} - 2\,\sin(x_{3})\bigr) - 2$ & 55.2 & 24.0 & 126.722 & 62.458 & $1.90\times10^{-4}$ & $1.30\times10^{-4}$ \\
$2\,\sin\bigl(2\,x_{3} + 2\bigr) - 2$ & 2.2 & 1.0 & 3.464 & 0.000 & $1.86\times10^{-4}$ & $4.64\times10^{-4}$ \\
$2\,\cos\bigl(2\,\sin(x_{2}) + 2\,\cos(x_{3})\bigr)$ & 8.0 & 3.8 & 15.570 & 6.608 & $1.37\times10^{-4}$ & $9.22\times10^{-5}$ \\
$2\,\cos\bigl(2\,x_{1}^{3} - 2\,x_{3}^{2}\bigr)$ & 20.6 & 10.0 & 55.294 & 23.446 & $9.84\times10^{-4}$ & $4.13\times10^{-4}$ \\
$2\,\mathrm{exp}\bigl(2\,x_{1}^{2} + 2\,\sin(x_{3})\bigr) + 2$ & 17.2 & 8.4 & 39.844 & 19.168 & $9.55\times10^{-3}$ & $7.52\times10^{-3}$ \\
$32\,x_{3}^{6} + 2$ & 2.0 & 1.0 & 2.084 & 0.000 & $1.10\times10^{-2}$ & $2.93\times10^{-3}$ \\
\bottomrule
\end{tabular}}
\caption{Results of LLM-informed and original FEX runs on 50 randomly generated Poisson functions.}
\label{tab-poisson}
\end{table*}

\end{document}